\def\eqref#1{equation~\ref{#1}}
\def\1{\bm{1}}
\def\vh{{\bm{h}}}
\def\vs{{\bm{s}}}
\def\vu{{\bm{u}}}
\def\vv{{\bm{v}}}
\def\vw{{\bm{w}}}
\def\vx{{\bm{x}}}
\def\vy{{\bm{y}}}
\def\mA{{\bm{A}}}
\def\mD{{\bm{D}}}
\def\mI{{\bm{I}}}
\def\mP{{\bm{P}}}
\def\mS{{\bm{S}}}
\def\mT{{\bm{T}}}
\def\mW{{\bm{W}}}
\DeclareMathAlphabet{\mathsfit}{\encodingdefault}{\sfdefault}{m}{sl}
\SetMathAlphabet{\mathsfit}{bold}{\encodingdefault}{\sfdefault}{bx}{n}
\def\gH{{\mathcal{H}}}
\def\gI{{\mathcal{I}}}
\def\gX{{\mathcal{X}}}
\def\gY{{\mathcal{Y}}}
\newcommand{\E}{\mathbb{E}}
\newcommand{\R}{\mathbb{R}}
\def\IN{{\mathsf{IN}}}
\def\LN{{\mathsf{LN}}}
\def\BN{{\mathsf{BN}}}
\def\GN{{\mathsf{GN}}}
\def\NO{{\mathsf{NO}}}
\def\RMSn{{\mathsf{RMSnorm}}}
\begin{document}

\title{The Hidden Power of Normalization Layers  in  Neural Networks: Exponential Capacity Control}

\author{\name Khoat Than \email khoattq@soict.hust.edu.vn \\
       \addr School of Information and Communication Technology, \\
       Hanoi University of Science and Technology, Hanoi, Vietnam
}

\editor{ }

\maketitle

\begin{abstract}
Normalization layers are critical components of modern AI systems, such as ChatGPT, Gemini, DeepSeek, etc. Empirically, they are known to stabilize training dynamics and improve generalization ability. However, the underlying theoretical mechanism by which normalization layers contribute to both optimization and generalization remains largely unexplained, especially when using many normalization layers in a deep neural network (DNN).

In this work, we develop a theoretical framework that elucidates the role of normalization through the lens of capacity control. We prove that an unnormalized DNN can exhibit exponentially large Lipschitz constants with respect to either its parameters or inputs, implying excessive functional capacity and potential overfitting. \textit{Such bad DNNs are uncountably many}. In contrast, \textit{the insertion of normalization layers provably can  reduce the Lipschitz constant at an exponential rate} in the number of normalization layers. This exponential reduction yields two fundamental consequences: (1) it smooths the loss landscape at an exponential rate, facilitating faster and more stable optimization; and (2) it constrains the effective capacity of the network, thereby enhancing generalization guarantees on unseen data. Our results thus offer a principled explanation for the empirical success of normalization methods in deep learning.
\end{abstract}

\begin{keywords}
 Normalization, Lipschitz constant, Capacity control, Generalization, Deep learning
\end{keywords}

\section{Introduction} \label{sec-Introduction}

Normalization methods such as Batch Normalization ($\BN$)~\citep{ioffe2015BNorm}, Layer Normalization ($\LN$)~\citep{ba2016LNorm}, Group Normalization ($\GN$)~\citep{wu2020GNorm},  Instance Normalization ($\IN$)~\citep{ulyanov2016INorm}, and $\RMSn$ \citep{zhang2019RMSnorm} have become critical components of modern AI systems, such as ChatGPT, Gemini, DeepSeek, etc. Among them, $\LN$ or $\RMSn$ are frequently used in  large language models \citep{openai2024gpt4,abdin2024phi4,gu2024mamba,behrouz2025nested} and vision models \citep{peebles2023DiT,ma2024sit,frans25Shortcut}. 

In practice, normalization layers consistently enhance performance of deep neural networks (DNNs) across a wide range of tasks.  A classical motivation for normalization is to mitigate the \emph{internal covariate shift} during training~\citep{ioffe2015BNorm}. Without normalization, the variance of inputs at intermediate layers may grow substantially~\citep{ioffe2015BNorm,bjorck2018understandingBN}---even exponentially with depth~\citep{de2020BN,zhang2019fixup,shao2020normalization}---causing unstable gradients and slow convergence. Normalization counteracts this effect by rescaling inputs to improve optimization stability.

Beyond optimization stability, normalization layers has long been observed to improve the \emph{generalization} of DNNs, yet its underlying theoretical mechanism remains unclear. Several hypotheses have been proposed. \citet{santurkar2018BNorm} showed that $\BN$ smooths the loss landscape by reducing the Lipschitz constant of the loss, which facilitates training. The ability to employ larger learning rates~\citep{bjorck2018understandingBN,zhang2019fixup} and the implicit regularization behavior observed in shallow architectures~\citep{luo2019UnderstandingBN} further suggest a regularizing effect. However, a precise theoretical characterization of this regularization for \emph{deep} networks is still missing~\citep{wu2020WN}. This leads to a long-standing open question:

\begin{quote}
\emph{Why do normalization layers help DNNs generalize well on unseen data?}
\end{quote}

In this work, we answer this question by showing that normalization layers provide \emph{explicit control over the functional capacity} of a network, as quantified by its Lipschitz constant. Under the widely accepted assumption of large input variance, we demonstrate that normalization operations reduce the Lipschitz constant at an \emph{exponential rate} with respect to normalization operations. This exponential reduction simultaneously smooths the optimization landscape and regularizes the network's capacity, leading to improved generalization and faster convergence. Our analysis thus establishes a unified theoretical foundation for the empirical success of normalization methods.

Our main contributions are summarized as follows:

\begin{itemize}
    \item \textit{Lipschitz-based capacity analysis:} 
    We present a comprehensive  analysis of normalization methods through the lens of \emph{Lipschitz-based capacity control}.
    
    \item \textit{Unnormalized networks exhibit extreme Lipschitz behavior:}
    We show that a feedforward ReLU network (FFN) without normalization can have \emph{exponentially large} or \emph{exponentially small} Lipschitz constants (with respect to either weights or inputs), and that such configurations are \emph{uncountably many}. Such constants exponentially depend on the depth. A huge Lipschitz constant w.r.t. the input implies that the network has an excessive functional capacity and hence easily gets overfitting. In contrast, extremely small Lipschitz constant implies that the network is prone to underfitting. Therefore, within one architecture, \textit{there exist uncountably many models which potentially are  either overfitting or underfitting.}
    
   The exponentially  large/small Lipschitz constants w.r.t. weights suggest that the loss landscape is extremely complex. We show that the gradient of the training loss can \textit{vanish (or explode)} at uncountably many weight matrices of the network, posing significant challenges for optimization.
    
    \item \textit{Normalization induces exponential capacity reduction:}
    Under large-variance conditions, each normalization layer ($\BN$, $\LN$, $\GN$, or $\IN$) reduces the Lipschitz constant by a factor inversely proportional to the input variance. Multiple normalization layers thus lead to an \emph{exponential} reduction, providing an explicit theoretical account of their regularization role.
    
    \item \textit{Exponential smoothing of the loss landscape:}
    For normalized FFNs, we prove that the loss landscape becomes exponentially smoother with respect to the number of normalization layers. Consequently, both the \emph{iteration complexity} (lower bound) and \emph{convergence rate} (upper bound) of gradient-based optimization improve exponentially.
    
    \item \textit{Generalization bound with local Lipschitz control:}
    We develop a novel generalization bound that incorporates \emph{local} Lipschitz constants, extending beyond prior global Lipschitz-based bounds. Our bound explains why a network can generalize well even when it is not globally Lipschitz continuous. When combined with our capacity analysis, this bound formally justifies the generalization advantage conferred by multiple normalization layers.
\end{itemize}

Overall, this work goes beyond simplified or shallow settings in prior studies to provide a rigorous theoretical understanding of normalization layers in modern deep architectures. While our analyses focus on feedforward networks, the revealed principles are general and can naturally extend to other architectures such as convolutional and transformer-based networks.

\emph{Organization:} Section \ref{sec-Related-work} provides further related work and some necessary notations. The main analyses on existing normalizers appears in Section~\ref{sec-normalizers-Lipschitz}. In Section~\ref{sec-benefit-controlling-capacity}, we show some extreme behaviors of unnormalized DNNs and discuss the exponential benefit of normalizers to control the capacity of a DNN through its Lipschitz constant. Section~\ref{sec-benefit-generalization-training} discusses the benefits of normalizers to both the generalization ability and training of DNNs, while Section~\ref{sec-Conclusion} contains some concluding remarks.

\section{Related work and Notations} \label{sec-Related-work}

\subsection{Further related work}

The ability of normalization methods \citep{ioffe2015BNorm,ioffe2017batchRe,bjorck2018understandingBN,awais2020revisitingBN} to mitigate the adverse effects of highly variable inputs is one of their most prominent advantages. This property contributes to both stable optimization and improved generalization. To formally understand these benefits, a series of studies have examined normalization from several theoretical perspectives, summarized below.

\textit{Optimization Aspect:} Normalization has been extensively studied from the standpoint of optimization dynamics. \cite{santurkar2018BNorm} demonstrated that batch normalization  reduces the Lipschitz constant of the loss function, thereby facilitating optimization. This property enables faster convergence and supports the use of larger learning rates \citep{luo2019UnderstandingBN,bjorck2018understandingBN,zhang2019fixup,li20BN,arora19BN}. \cite{karakida2019normalization} further showed that applying $\BN$ to the final layer can alleviate pathological sharpness, an undesirable property from the optimization viewpoint, although normalization in earlier layers does not yield the same effect. In the linear regression setting, \citet{cai2019quantitative} proved that training with $\BN$ leads to faster convergence. Similarly, \citet{kohler2019convergenceBN} established that $\BN$ can exponentially accelerate convergence, though their result holds only for a perceptron model with Gaussian inputs. They also observed that normalization weakens inter-layer dependencies, effectively simplifying the curvature structure of the network. 

\citet{lyu22understandingNorm} analyzed gradient descent with weight decay (GD+WD) for minimizing scale-invariant losses, common in normalized DNNs, and showed that near a local minimum, GD+WD drives the parameter vector to oscillate along the manifold of minimizers while approximately following a sharpness-reduction flow. This suggests that normalization implicitly encourages gradient descent to reduce loss sharpness. However, their results do not address the benefit of normalization on iteration complexity or convergence rates, whereas the analyses in \citep{karakida2019normalization,cai2019quantitative,kohler2019convergenceBN} are restricted to simplified models that remain far from modern deep architectures. Very recently, \cite{cisneros25WN} analyzed the benefits of \textit{Weight Normalization} (WN) for training DNNs. However, their results do not apply to other normalization methods (for the inputs) studied in this work.

\textit{Approximation Aspect:} From the approximation perspective, several works have revealed the expressive benefits of normalization.
\citet{frankle21BN} empirically showed that the affine parameters of $\BN$ layers possess substantial expressive power, training only the $\BN$ layers can yield surprisingly high performance. \citet{mueller2023normalization} found that applying sharpness-aware minimization (SAM) solely to normalization layers performs comparably to applying SAM across the entire network, suggesting that normalization layers play a critical role in shaping model expressivity. Strengthening these empirical findings, \citet{burkholzbatch24} theoretically established that, given sufficiently wide and deep architectures, training normalization layers alone is sufficient to represent a broad class of functions.

\textit{Generalization Aspect:} Normalization is also empirically observed to enhance the generalization ability of DNNs \citep{luo2019UnderstandingBN,lyu22understandingNorm}. Despite this, theoretical explanations remain limited. \citet{lyu22understandingNorm} derived a generalization bound for scale-invariant losses using the notion of spherical sharpness, which formally indicates that normalization can reduce sharpness and thereby improve generalization. This provides a partial theoretical explanation for the role of normalization. However, the bound in \citep{lyu22understandingNorm} applies to idealized normalizers and does not capture practical implementations that incorporate smooth constants to prevent division by near-zero variance, leading to scale-varying loss \citep{arora19BN}.  Moreover, it does not account for the cumulative effect of multiple normalization layers within a deep architecture. In contrast, our work establishes a new generalization bound that explicitly characterizes the benefit of multiple normalizers in deep networks.

\cite{cisneros25WN} showed the benefits of WN for DNNs, by developing a generalization bound which bases on the Rademacher complexity of WN-based networks. Similar with those by \cite{golowich2020RC}, the bound in \citep{cisneros25WN} depends polynomially on the network depth. Despite working well with WN-based networks, their bound cannot apply to other types of normalization methods, such as $\BN, \LN, \GN, \IN$. This is a major limitation, making their bound to be entirely different from ours in this work.

\textit{Capacity Control Aspect:} Another influential line of research interprets normalization as a mechanism for capacity control. \citet{santurkar2018BNorm} proved that a single $\BN$ operation can reduce the Lipschitz constant of the loss, implying a regularization effect. However, extending their analysis to networks with multiple normalization layers is nontrivial. \citet{luo2019UnderstandingBN} investigated the regularization behavior of $\BN$ in a single-layer perceptron, and \citet{lyu22understandingNorm} showed that GD+WD implicitly leads to flatter minima. These studies collectively highlight the regularizing role of normalization but remain confined to shallow or idealized architectures. To the best of our knowledge, no existing work provides an explicit characterization of regularization or capacity control in \textbf{deep} normalized networks. In contrast, our analysis demonstrates that popular normalization schemes yield \textit{exponential} improvements in capacity control, convergence rate,  iteration complexity and generalization error, thereby offering a unified theoretical account of their empirical effectiveness.

\subsection{Notations}

A bold character (e.g., $\vx$) often denotes a vector, while a bold big symbol (e.g., $\mD$) often denotes a matrix or set. Denote $\| \cdot \|$ as the $\ell_2$ norm. Given $ \vx =(x_1, ..., x_n) \in \R^n$ and a constant $a$, we denote $\vx -a = (x_1 -a, ..., x_n -a)$. Later we often write $\mA\vx$ which is the multiplication of a matrix $\mA$ with  vector $\vx$ organized in a suitable column. We denote $\epsilon$ as the (small) smoothing constant for some normalizers. $\vx_S$ denotes a subset of $\vx$, corresponding to an index set  $\mS \subseteq [n]$. $| \mS |$ denotes the size/cardinality of $\mS$.

A function $\vy: (\gX, d_x) \rightarrow (\gY, d_y)$ is said to be \textit{$L$-Lipschitz continuous} if $ d_y(\vy(\vx), \vy(\vx'))  \le L d_x (\vx, \vx')$ for any $\vx, \vx' \in \gX$, where $d_x$ is a metric on $\gX$, $d_y$ is a metric on $\gY$, and $L \ge 0$ is the Lipschitz constant.
For simplicity we will consider both metrics to be $\ell_2$ distance, i.e., $d_x(\vx_1, \vx_2) = \| \vx_1 - \vx_2 \|$ and $d_x(\vy_1, \vy_2) = \| \vy_1 - \vy_2 \|$. 
Denote  $\| \vy \|_{Lip}$ as the Lipschitz constant  with respect to its input. For a function $\vh(\mW, \vx)$ depending on two variables, sometimes we write $\| \vh, \mW\|_{Lip}$ to denote the  Lipschitz constant of $\vh$ w.r.t. $\mW$ for a fixed $\vx$. Similarly, sometimes we can use $\| \vh, \vx\|_{Lip}$, for a fixed $\mW$. 
For a linear operator $\mA\vx$, its Lipschitz constant is  $\| \mA\vx \|_{Lip} = \| \mA\| $ which is the spectral norm of matrix $\mA$. 

$O(g(n))$ refers to a function which does not grow faster than a constant multiple of $g(n)$ for large $n$. Similarly $\Omega(g(n))$ refers to a function which grows at least as fast as a constant multiple of $g(n)$ for large $n$.

For a given vector $\vv = (v_1,...,v_n)$ and an integer $m$, denote 
\begin{equation}
\nonumber
\vv^{(m)} = \begin{cases}
(v_1,...,v_m)  & \text{ if } m \le n, \\
\left(\vv, \textbf{0}\right) & \text{ otherwise }
\end{cases}
\end{equation} 
as the result of truncation/zero-padding to $\vv$ to make a vector of size $m$. Also $\vv^{(m_1, m_2)} = \left( \vv^{(m_1)}\right) ^{(m_2)}$ is the result of truncation/zero-padding to $\vv^{(m_1)}$  to make a vector of size $m_2$. Sometimes we denote $\vv^{(m_1,..., m_K)}$ as the result of a series of truncation/zero-padding to $\vv$ with respect to the sequence $m_1,..., m_K$. 


\section{Lipschitz continuity of normalizers} \label{sec-normalizers-Lipschitz}

We first analyze three popular normalization methods: $\BN, \LN$, and $\GN$. Some estimates of their Lipschitz constants are presented.

\subsection{Batch normalization}

$\BN$ \citep{ioffe2015BNorm} normalizes individual inputs at a layer according to their distributions. It means that two different inputs will be normalized independently. 

Let $\{x^{(1)},..., x^{(m)}\}$ be the samples of a signal/variable $x$ with population (either true or estimated) mean $\mu_x$ and variance $\sigma_x^2$. $\BN$ will normalize each sample $x^{(i)}$ as:
\begin{eqnarray}
\label{eq-BN-01}
\BN(x^{(i)},  \epsilon) &=& \frac{1}{\sqrt{\sigma_x^2 + \epsilon}}  (x^{(i)} - \mu_x)
\end{eqnarray}
where the shift and scale parameters are omitted for simplicity. Note that $\partial \BN(x,  \epsilon) / \partial x = {1}/{\sqrt{\sigma_x^2 + \epsilon}}$, since $\mu_x$ and  $\sigma_x^2$ represent the population distribution of signal $x$. This Jacobian is large only when the variance of $x$ is small. It suggests that the magnitude of $\|\BN \|_{Lip}$  depends on the nature of the distribution of signal $x$ only, but not on  other signals in the same layer. The following lemma provides an estimate.

\begin{lemma}\label{lem-BN-Jacobian-norm}
Given $\epsilon>0$, let $\vx = (x_1,..., x_n)$ be an input and $\BN(\vx, \epsilon)$ be the normalization of $\vx$, where each input $x_k$ with  population variance $\sigma_{k}^2 $ is normalized as (\ref{eq-BN-01}). Then 
$ \| \BN \|_{Lip} =  \|{1}/{\boldsymbol{\sigma}} \|$, where ${1}/{\boldsymbol{\sigma}} = \left( (\sigma_1^2 + \epsilon)^{-0.5}, ..., (\sigma_n^2 + \epsilon)^{-0.5} \right)$.
\end{lemma}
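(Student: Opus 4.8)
The plan is to compute the Lipschitz constant directly from the definition, exploiting the fact that $\BN$ acts coordinatewise and is affine in each coordinate. First I would observe that since $\mu_k$ and $\sigma_k^2$ are population quantities (fixed, independent of the particular input $\vx$), the map $\BN(\cdot,\epsilon)$ is a linear operator followed by a translation; concretely, $\BN(\vx,\epsilon) = \mD(\vx - \vmu)$ where $\mD = \mathrm{diag}(1/\sqrt{\sigma_1^2+\epsilon},\ldots,1/\sqrt{\sigma_n^2+\epsilon})$ and $\vmu = (\mu_1,\ldots,\mu_n)$. The translation by $-\vmu$ does not affect the Lipschitz constant (it cancels when we take differences), so $\|\BN\|_{Lip} = \|\mD\vx\|_{Lip}$.

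Next I would invoke the fact, already recorded in the Notations subsection, that for a linear operator $\mA\vx$ the Lipschitz constant in the max norm equals the induced $\infty$-norm $\|\mA\| = \max_k \sum_j |A_{kj}|$. For the diagonal matrix $\mD$ this sum has a single nonzero term per row, so $\|\mD\| = \max_k 1/\sqrt{\sigma_k^2+\epsilon} = \|1/\boldsymbol{\sigma}\|$, where the last equality is just the definition of the max norm of the vector $1/\boldsymbol{\sigma}$ given in the statement. Chaining these equalities gives $\|\BN\|_{Lip} = \|1/\boldsymbol{\sigma}\|$.

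Alternatively, if one prefers not to cite the induced-norm fact, I would argue from scratch: for any $\vx,\vx'$,
\begin{equation}
\nonumber
\|\BN(\vx,\epsilon) - \BN(\vx',\epsilon)\| = \max_k \frac{|x_k - x_k'|}{\sqrt{\sigma_k^2+\epsilon}} \le \Big(\max_k \frac{1}{\sqrt{\sigma_k^2+\epsilon}}\Big)\max_k |x_k - x_k'| = \|1/\boldsymbol{\sigma}\|\cdot\|\vx - \vx'\|,
\end{equation}
which shows $\|\BN\|_{Lip} \le \|1/\boldsymbol{\sigma}\|$. For the matching lower bound, pick the index $k^\star$ achieving the maximum of $1/\sqrt{\sigma_k^2+\epsilon}$ and take $\vx = \ve_{k^\star}$, $\vx' = \vzero$ (the standard basis vector against the origin); then both sides of the Lipschitz inequality are equal to $\|1/\boldsymbol{\sigma}\|$, forcing $\|\BN\|_{Lip} \ge \|1/\boldsymbol{\sigma}\|$. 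Combining the two inequalities yields the claim.

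There is no real obstacle here — the statement is essentially a restatement of the fact that a diagonal linear map has induced $\infty$-norm equal to the max of the absolute diagonal entries. The only point requiring a moment of care is justifying that $\BN$ is genuinely linear (up to the harmless constant shift), i.e., that $\mu_k,\sigma_k^2$ must be treated as input-independent population parameters rather than batch statistics recomputed per input; this is exactly the modeling assumption stated in the text preceding the lemma, so I would simply flag it explicitly at the start of the proof.
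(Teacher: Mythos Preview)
Your proposal is correct and essentially the same as the paper's proof: the paper computes the Jacobian $\partial\BN/\partial\vx$ directly, observes it is the diagonal matrix $\mD$, and takes $\|\BN\|_{Lip}=\sup_{\vx}\|\partial\BN/\partial\vx\|=\|1/\boldsymbol{\sigma}\|$, which is exactly your first argument phrased in derivative language. Your alternative direct-inequality argument with the explicit lower-bound witness $\ve_{k^\star}$ is a nice self-contained variant that avoids invoking differentiability, but it is not needed here.
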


This observation, whose proof appears in Appendix~\ref{app-BN}, suggests that  $\|\BN \|_{Lip}$ will be small when all the variances of the inputs are large. An increase in  variances will decrease  $\|\BN \|_{Lip}$. Moreover, a higher-varying input can be penalized stronger. This fact provides us an idea about capacity control of $\BN$.\footnote{In practice, the population variance $\sigma^2_{k}$ for each input $x_k$ is not known and often estimated from the training data \citep{ioffe2015BNorm}. Such an estimate and hence the effectiveness of $\BN$ are  largely affected by batch size. However, the  variances estimated from the training phase are sufficient for analysis on capacity control and generalization of a trained network. Therefore our main messages are essentially the same, although each $\sigma^2_{k}$ is either the truth or estimate.}

\subsection{Layer normalization}

$\LN$ \citep{ba2016LNorm} is an operator that can take an input $\vx \in  \R^n$ and output $\LN(\vx, \epsilon)$ where 
\begin{eqnarray}
\label{eq-LN-01}
\LN(\vx, \epsilon) &=& \frac{1}{\sqrt{\sigma^2_n + \epsilon}}  (\vx - \mu) \\
\label{eq-LN-02}
\mu = \frac{1}{n} \sum_{i=1}^{n} x_i, & &
\sigma^2_n = \frac{1}{n} \sum_{i=1}^{n} (x_i - \mu)^2
\end{eqnarray}
Note that the variance in $\LN$ is different in nature from that in $\BN$. We use directly the input vector $\vx$ to compute variance for $\LN$, making correlations between the outputs of a layer. Therefore, the behavior of $\LN$ may be significantly different from that of $\BN$. The following lemma shows some properties, whose proof appears in Appendix \ref{app-LN}.

\begin{lemma}\label{lem-LN-Jacobian-norm}
Consider any $\vx \in \R^n$ and $\sigma_n$ defined as (\ref{eq-LN-02}). For any $\epsilon>0$, we have $\left\| \frac{\partial \LN(\vx, \epsilon)}{\partial \vx} \right\| \le \frac{1}{\sqrt{\sigma^2_n + \epsilon}}$. Denoting $\sigma  = \min_{\vx \in \R^n} \sqrt{\sigma_n^2 + \epsilon}$, we further have  $\| \LN \|_{Lip}  \le 1 / \sigma$.
\end{lemma}

Lemma \ref{lem-LN-Jacobian-norm} shows that the sensitivity of $\LN$ is entirely governed by its variance normalization term. Geometrically, $\LN$ first projects the input onto the mean-zero subspace and then rescales it by the inverse empirical standard deviation. The projection step is non-expansive (spectral norm equal to 1), so any potential amplification arises solely from the scaling factor $(\sigma_n^2+\epsilon)^{-0.5}$. The lemma therefore formalizes a clean operator-theoretic statement: $\LN$ cannot expand perturbations by more than $(\sigma_n^2+\epsilon)^{-0.5}$. In particular, the stabilizer $\epsilon$ ensures global Lipschitz continuity, preventing gradient blow-up when the input variance approaches zero. 

Compared with $\BN$, this behavior is structurally simpler and more localized. $\LN$ operates independently on each sample, so its Jacobian and Lipschitz constant depend only on that sample's feature variance. In contrast, $\BN$ empirically estimates the mean and variance of each input from a batch and hence $\BN$ normalizes across the batch, making its Jacobian couple different examples and rendering its sensitivity dependent on batch statistics and inter-sample correlations. Consequently, $\LN$ admits a deterministic per-sample Lipschitz bound, while $\BN$'s effective smoothness is inherently batch-dependent. 

\subsection{Group normalization}

$\GN$ \citep{wu2020GNorm} is a slight modification of $\LN$. Instead of normalizing all the input signals at a layer by the same statistics, $\GN$ takes a group of those signals and then normalizes them. This normalizer makes effects more local. The  number of groups is a hyperparameter. 

Let $\vx = (x_1,..., x_n)$ be the input at a layer, where $n \ge 2$.  For an index subset $\mS \subseteq \{1,...,n\}$, $\GN$ takes input $\vx_S$ from $\vx$ and outputs $\GN(\vx_S, \epsilon)$, where:
\begin{eqnarray}
\label{eq-GN-01}
\GN(\vx_S, \epsilon) &=& \frac{1}{\sqrt{\sigma_S^2 + \epsilon}}  (\vx_S - \mu_S), \\
\mu_S = \frac{1}{|\mS|} \sum_{i \in \mS} x_i, & &
\sigma_S^2 = \frac{1}{|\mS|} \sum_{i \in \mS} (x_i - \mu_S)^2
\end{eqnarray}

$\GN$ will become $\LN$ when $n = |\mS|$. The following lemma, which is proven in Appendix \ref{app-GN}, shows some properties. 

\begin{lemma}\label{lem-GN-Jacobian-norm}
Let $\bigcup_{k=1}^C \mS_k$ be a decomposition of $\{1,...,n\}$ into $C$ disjoint subsets, each of size at least 2. Assume that those subsets are used by $\GN$ to normalize an input $\vx \in  \R^n$. Denote $\sigma_k^2(\vx)$ as the variance  to compute (\ref{eq-GN-01}) for  group $\mS_k$, and $\sigma_k^2 = \min_{\vx \in \R^n} \sigma_k^2(\vx)$.  We have $\| \GN \|_{Lip}  \le \sum_{k=1}^C (\sigma_{k}^2 + \epsilon)^{-0.5}$.

\end{lemma}

\section{Capacity control of popular normalization methods} \label{sec-benefit-controlling-capacity}

Normalization methods are widely recognized for their ability to stabilize training and improve the predictive accuracy of DNNs. While previous studies have analyzed their benefits in simplified or shallow settings, the cumulative effect of multiple normalization layers in deep architectures remains insufficiently understood. In this section, we show that, for a feedforward network (FFN), the model family can simultaneously contain members with excessively large or abnormally weak capacities, resulting in an extremely complex loss landscape with uncountably many sharp and flat regions that lead to gradient explosion or vanishing. Moreover, we show that multiple normalization layers can impose an exponential penalty on the  network capacity.

\subsection{The Lipschitz  constant of a DNN} \label{subsec-Lipshitz-constant-DNN}

When training a neural network, a learning algorithm often searches for a specific member in a family $\gH$ of models (hypotheses). In practice, the weight matrices of a neural network are often bounded above. Therefore it is natural to consider the following family.

\begin{definition}[DNN] \label{def-FCN}
Let fixed activation functions $(g_1, \dots, g_K)$, whose Lipschitz constants are at most 1, and the bounds $(s_1, \dots, s_K)$. 
Let $\vh_{i} = g_i(\mW_i \vh_{i-1})$, with $\vh_{0} = \vx$, be the feedforward network associated with weight matrices $(\mW_1, \dots, \mW_i)$. Let $\gH = \{\vh_K(\vx): \vx \in \R^n,  \| \mW_i \| \le s_i, \forall i \le K\} $ be the family of neural nets with $K$ layers. 
\end{definition}

Note that most popular activation functions (e.g., ReLU, Leaky ReLU, Tanh, Sigmoid,  Softmax) have Lipschitz constants being at most 1. The upper bound on the weight norms is practically natural and is often used in prior theories \citep{bartlett2017SpectralMarginDNN,neyshabur2018SpectralMarginDNN,arora2021dropout}. 

Next we consider the Lipschitz constant of a DNN. Note that such a constant tells the complexity of a function. A larger Lipschitz constant implies the function of interest may change faster in the areas  around some inputs. It also suggests that the function may be more complex. Therefore, the Lipschitz constant can tell the complexity of a DNN (and its family). A simple observation reveals the following  property, whose proof is in Appendix~\ref{app-DNN-proofs-normalization}.

\begin{lemma}[Upper bound] \label{lem-Lipschitz-FCN}
$\left\| \vh, \vx \right\|_{Lip}  \le  P_w(\vh)$, for all $ \vh \in \gH$  defined in Definition~\ref{def-FCN}, where $P_w(\vh) = \prod_{k=1}^{K} \|\mW_{i}\|$. 
\end{lemma}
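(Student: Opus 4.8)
The plan is to prove the claim by induction on the number of layers, using the elementary fact that the Lipschitz constant of a composition is at most the product of the Lipschitz constants, together with the two hypotheses built into Definition~\ref{def-FCN}: each activation $g_i$ is $1$-Lipschitz, and each linear map $\vx \mapsto \mW_i \vx$ has Lipschitz constant exactly $\|\mW_i\|$ (its induced $\infty$-norm, as recalled in the Notations section).

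First I would set up the induction. For $0 \le i \le K$, let $\vh_i(\vx)$ denote the map $\vx \mapsto \vh_i$ obtained by running the network through layer $i$, so $\vh_0 = \vx$ is the identity and $\vh_i = g_i \circ (\mW_i \,\cdot\,) \circ \vh_{i-1}$. The induction hypothesis is $\|\vh_i, \vx\|_{Lip} \le \prod_{k=1}^{i} \|\mW_k\|$. The base case $i = 0$ is immediate since the identity map is $1$-Lipschitz and the empty product equals $1$. For the inductive step, I would invoke the standard composition bound: if $f$ is $L_f$-Lipschitz and $g$ is $L_g$-Lipschitz (both in the max norm), then $g \circ f$ is $L_f L_g$-Lipschitz. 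Applying this twice — once for the linear layer $\mW_i$ with constant $\|\mW_i\|$, once for $g_i$ with constant at most $1$ — gives
\begin{equation}
\nonumber
\|\vh_i, \vx\|_{Lip} \;\le\; 1 \cdot \|\mW_i\| \cdot \|\vh_{i-1}, \vx\|_{Lip} \;\le\; \|\mW_i\| \prod_{k=1}^{i-1} \|\mW_k\| \;=\; \prod_{k=1}^{i} \|\mW_k\|,
\end{equation}
which closes the induction. Taking $i = K$ yields $\|\vh, \vx\|_{Lip} \le \prod_{k=1}^{K} \|\mW_k\| = P_w(\vh)$ for every $\vh \in \gH$. (I note in passing that the statement as written has a minor index slip, writing $\|\mW_i\|$ inside a product over $k$; the intended quantity is $\prod_{k=1}^{K}\|\mW_k\|$, and the proof delivers exactly that.)

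There is essentially no serious obstacle here — the lemma is a routine consequence of composing Lipschitz maps. The only points worth stating carefully are (i) that all metrics are the max norm, so the induced matrix norm appearing in the bound is indeed $\|\mW_i\| = \max_k \sum_j |A_{kj}|$ as defined in the Notations, and (ii) that one must use $\|g_i\|_{Lip} \le 1$ rather than equality, since activations like ReLU are contractive but not isometric; this only helps the inequality. If one wanted the tightest presentation, one could remark that the bound holds uniformly over $\vx$ (it is a global Lipschitz bound), which is exactly what makes it a statement about the whole family $\gH$ and sets up the contrast with the later local-Lipschitz analysis.
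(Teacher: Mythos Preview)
Your proposal is correct and essentially identical to the paper's own proof: both arguments unwind the composition $\vh_K = g_K \circ (\mW_K\,\cdot\,) \circ \vh_{K-1}$ and repeatedly apply the submultiplicativity of Lipschitz constants together with $\|g_i\|_{Lip}\le 1$ and $\|\mW_i\,\cdot\,\|_{Lip}=\|\mW_i\|$. The only cosmetic difference is that you phrase it as a formal induction while the paper writes out the chain of inequalities directly; your observation about the $\mW_i$ versus $\mW_k$ index slip is also correct.
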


This lemma says that the Lipschitz constant of any member in family $\gH$ is at most $\prod_{k=1}^{K} s_k$. Such an upper bound holds true for all members of $\gH$. Note that this upper bound can be loose for a specific member of interest. One can wonder whether or not this upper bound is too pessimistic. To this end, we reveal the following property  whose proof appears in Appendix~\ref{app-DNN-proofs-no-norm}.

\begin{theorem}[Lower bound] \label{thm-Lipschitz-FCN-lower-bound}
Consider the family $\gH$  defined in Definition~\ref{def-FCN} with ReLU activation functions. For any $a_i \in [0, s_i], i \le K$, there exists a member $\vh^* \in \gH$ satisfying  $\left\| \vh^*, \vx \right\|_{Lip}  =  \prod_{k=1}^{K} a_k$.
\end{theorem}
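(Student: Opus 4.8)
The plan is to construct, for any prescribed target values $a_1, \dots, a_K$ with $a_i \in [0, s_i]$, an explicit member $\vh^* \in \gH$ realizing Lipschitz constant exactly $\prod_{k=1}^K a_k$. The natural choice is to take each weight matrix $\mW_i$ to be the rank-one (or at least extremely sparse) matrix that routes a single coordinate through with gain $a_i$ and kills everything else, e.g. $\mW_i = a_i \, \mathbf{e}_1 \mathbf{e}_1^\top$ (the matrix with $a_i$ in the top-left entry and zeros elsewhere). Then $\|\mW_i\| = a_i \le s_i$, so $\vh^* \in \gH$, and the composed network computes, for input $\vx = (x_1, \dots, x_n)$, a function that depends only on the first coordinate: $\vh^*(\vx) = (\,g_K(a_K\, g_{K-1}(\cdots a_1 \mathrm{ReLU}(x_1)\cdots))\,, 0, \dots, 0)$ along the relevant line.

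First I would verify the upper bound $\|\vh^*, \vx\|_{Lip} \le \prod_k a_k$: this is immediate from Lemma~\ref{lem-Lipschitz-FCN} since $P_w(\vh^*) = \prod_k \|\mW_i\| = \prod_k a_k$. Second, and this is the crux, I would exhibit two inputs $\vx, \vx'$ on which the Lipschitz inequality is tight. The key point exploited here is that ReLU is the identity on the nonnegative reals, so if we feed inputs whose first coordinate is large and positive, every pre-activation stays in the linear regime and the network acts as the honest linear map $a_K a_{K-1} \cdots a_1$ on that coordinate. Concretely, take $\vx = (t+1, 0, \dots, 0)$ and $\vx' = (1, 0, \dots, 0)$ for large $t > 0$: then at every layer the first pre-activation coordinate is positive (a product of positive numbers times a positive input, shifted appropriately), so each $g_i$ and each ReLU acts linearly, giving $\vh^*(\vx) - \vh^*(\vx') = (\,(\prod_k a_k)\, t\,, 0, \dots, 0)$, hence $\|\vh^*(\vx) - \vh^*(\vx')\| = (\prod_k a_k)\,\|\vx - \vx'\|$ in the max norm. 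Combined with the upper bound this forces $\|\vh^*, \vx\|_{Lip} = \prod_k a_k$ exactly.

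The main obstacle — really the only subtle point — is the degenerate case where some $a_k = 0$, which makes the whole product zero and makes the "stay in the linear regime" argument vacuous (the output is identically zero). But then the claim $\|\vh^*, \vx\|_{Lip} = 0$ is just the statement that $\vh^*$ is constant, which holds trivially for the zero network. A secondary bookkeeping nuisance is making sure the activation functions $g_i$ (which need not be ReLU in general, though the theorem restricts to ReLU activations) do not distort the computation; since the theorem hypothesizes ReLU activations and ReLU is linear on $[0,\infty)$, choosing inputs that keep all pre-activations nonnegative handles this uniformly. I would also remark that this construction shows such $\vh^*$ are \emph{uncountably many}: one may replace the single fixed sparse pattern by any of a continuum of choices (e.g. scaling by $a_i$ along any fixed unit vector, or perturbing the unused entries within the norm budget without affecting the realized Lipschitz constant along the chosen line), which is presumably how the companion "uncountably many bad DNNs" statement in the introduction is obtained.
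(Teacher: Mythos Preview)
Your proposal is correct and follows essentially the same approach as the paper: construct $\vh^*$ with weight matrices that are scaled ``identity-like'' matrices and exploit that ReLU is the identity on the nonnegative cone so that the composition acts linearly with gain $\prod_k a_k$ on a suitable input direction. The only cosmetic differences are that the paper uses the block matrix $\mW_i = a_i \mI_i$ (a truncated/zero-padded identity) rather than your rank-one $a_i\,\ve_1\ve_1^\top$, and the paper leaves the attainment of the lower bound implicit (asserting $\|ReLU(\vx^{(n_1,\dots,n_K)}),\vx\|_{Lip}=1$ without exhibiting witnesses), whereas you explicitly display the pair $\vx=(t+1,0,\dots,0)$, $\vx'=(1,0,\dots,0)$ and also treat the degenerate case $a_k=0$ separately---both of which make your write-up slightly more careful.
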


\begin{figure}[t]
\centering
\includegraphics[width=0.4\textwidth]{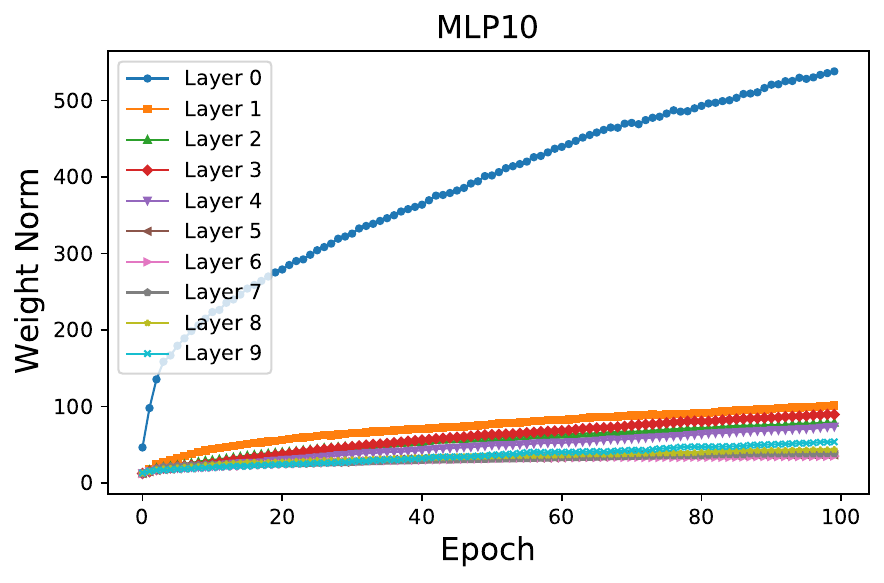}
\hspace{5pt}
\includegraphics[width=0.26\textwidth]{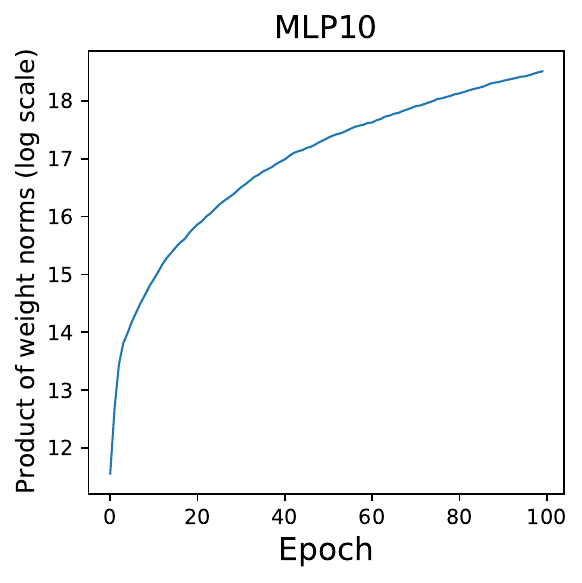}
\hspace{5pt}
\includegraphics[width=0.26\textwidth]{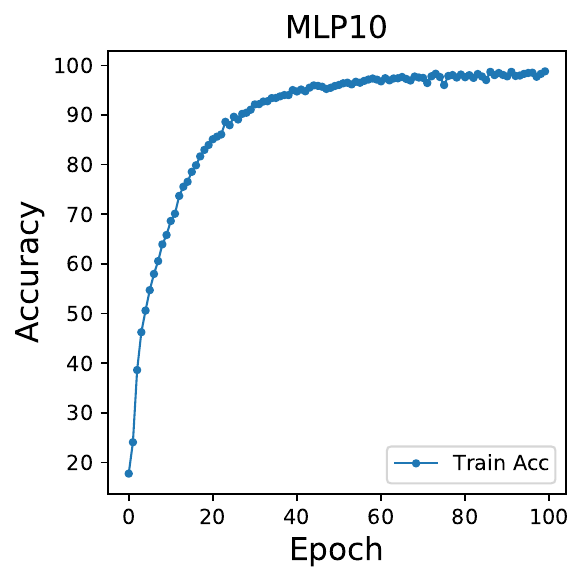}
\\
\includegraphics[width=0.4\textwidth]{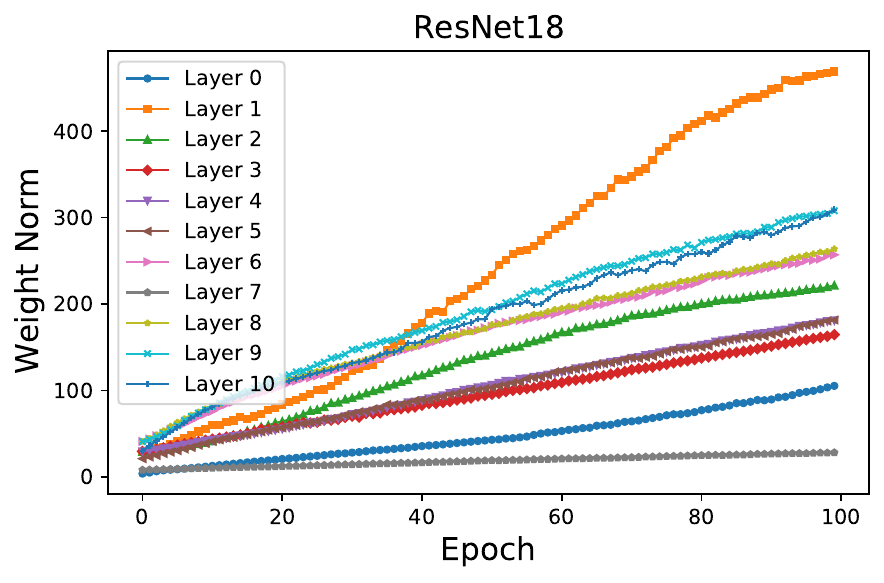}
\hspace{5pt}
\includegraphics[width=0.26\textwidth]{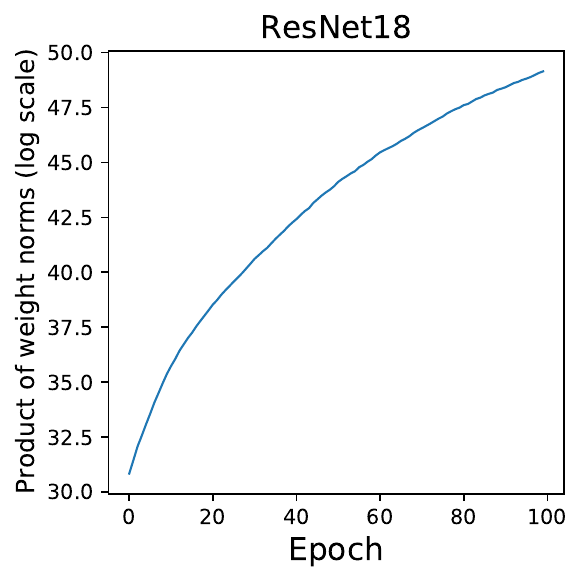}
\hspace{5pt}
\includegraphics[width=0.26\textwidth]{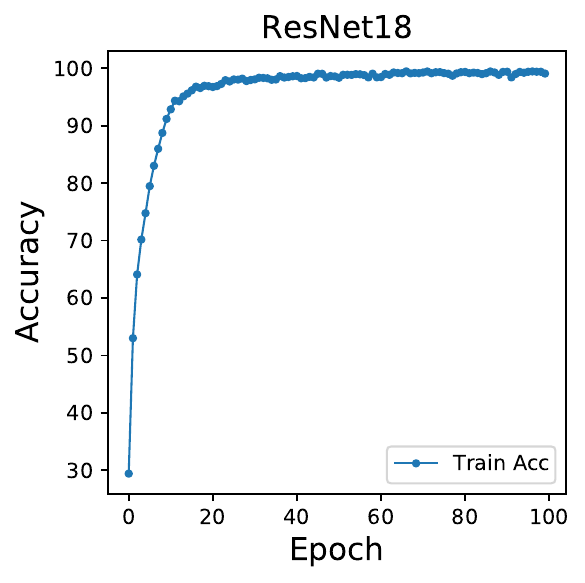}
\caption{The dynamics along the training process. The leftmost subfigures present  the weight norm at each layer,  the middle subfigures report the product of all weight norms, while the rightmost subfigures report the training accuracy. CIFAR10 dataset is used to train ResNet18 and a ReLU network with 10 layers. For ResNet18, only the dynamics of the first 11 weight matrices are presented for clarity. Detailed settings can be found in Appendix~\ref{app-empirical-evaluations}.} 
\label{fig:MLP-weight-norms}
\end{figure}

\begin{remark}\label{remark-01-input-Lipschitz}
These  results lead to several  implications:
\end{remark}
 \begin{itemize}
 \item \textit{Uncountably many members with exponentially large Lipschitz constants:}
 This phenomenon becomes evident when $s_i > 1$ for all $i$. In such cases, the parameters $a_i$ can take uncountably many values within the interval $(1, s_i]$, resulting in an uncountable set of models whose Lipschitz constants are of order $\prod_{k=1}^{K} a_k$. The product grows exponentially with respect to the network depth $K$, implying that even moderate increases in depth can lead to extremely large Lipschitz values.
 
 From a learning perspective, this exponential growth can introduce a major challenge: \textbf{overfitting}. A large Lipschitz constant indicates that the model is highly sensitive to small input perturbations, which can make it fit noise rather than structure in the training data. 
 
 Empirical evidence supporting this observation is illustrated in Figure~\ref{fig:MLP-weight-norms}. When training ResNet18 and a deep ReLU network, the norms of weight matrices at all layers often become significantly larger than one as training progresses. This behavior suggests that the corresponding norm bounds $s_i$ must also be large, and hence the overall bound $\prod_{k=1}^{K} s_k$ can become astronomically high for modern DNNs (e.g., $10^{50}$ for ResNet18). Figure~\ref{fig:preBN-variance-Norm-EfficientNet} in Appendix~\ref{app-empirical-evaluations} reports the same behavior for EfficientNet, which has extreme weight norms. Such exponential growth highlights the necessity of regularization  techniques to control the Lipschitz behavior in practice.
 
 \item \textit{Uncountably many members with exponentially small Lipschitz constants:}
 Conversely, for numbers $a_i < 1$, there exists models whose Lipschitz constants decay exponentially with depth. While these networks are mathematically valid members of the family $\gH$, they tend to exhibit limited expressive capacity. A very small Lipschitz constant implies that the network output changes only marginally with respect to the input, resulting in overly smooth or nearly constant mappings. Such models typically suffer from \textbf{underfitting}, as they cannot capture the complexity of real-world data. 
 
 \item \textit{Tightness of Lemma~\ref{lem-Lipschitz-FCN}:}
 The above observations also confirm that the Lipschitz bound provided in Lemma~\ref{lem-Lipschitz-FCN} is tight for the hypothesis family $\gH$. That is, without imposing additional structural or statistical assumptions, it is impossible to derive a substantially tighter upper bound on the Lipschitz constant. The lemma therefore captures the inherent variability within $\gH$---ranging from extremely contractive to highly expansive mappings---and highlights the importance of further assumptions if one aims to obtain sharper control over model stability and generalization.
 \end{itemize}

Although  Theorem \ref{thm-Lipschitz-FCN-lower-bound} reveals these properties for ReLU networks, we believe that these properties can appear in many other types of DNNs.

\subsection{Smoothening DNNs by normalization} \label{subsec-smoothen}

Next we  want to see the role and benefits of a normalizer. To this end, we consider the following normalized DNNs.

\begin{definition}[Normalized DNN] \label{def-normalized-FCN}
Let fixed activation functions $(g_1, \dots, g_K)$, whose Lipschitz constants are at most 1, and the bounds $(s_1, \dots, s_K)$. 
Let $\vh_{i} = g_i(\NO_i(\mW_i \vh_{i-1}))$, with $\vh_{0} = \vx$, be the neural network associated with weight matrices $(\mW_1, \dots, \mW_i)$, where $\NO_i$ denotes a normalization operator at layer $i$. Let $\gH_{no} = \{\vh_K(\vx): \vx \in \R^n,  \| \mW_i \| \le s_i, \forall i \le K\} $ be the family of normalized DNNs with $K$ layers.  
\end{definition}

The first property is revealed in the following result, whose proof appears in Appendix~\ref{app-DNN-proofs-normalization}.

\begin{theorem} \label{lem-Lipschitz-FCN-normalize}
$\left\| \vh_{no}, \vx \right\|_{Lip}  \le  P_w(\vh_{no}) \prod_{k=1}^{K} \| \NO_k\|_{Lip}$, for all $ \vh_{no} \in \gH_{no}$  defined in Definition~\ref{def-normalized-FCN}, where where $P_w(\vh_{no}) = \prod_{k=1}^{K} \|\mW_{i}\|$. 
\end{theorem}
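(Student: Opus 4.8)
The plan is a direct chain-rule argument for Lipschitz constants, organized as an induction on the depth $K$; it is the normalized analogue of the proof of Lemma~\ref{lem-Lipschitz-FCN}, with one extra factor per layer. The underlying fact is the elementary composition rule: if $\phi = \phi_m \circ \cdots \circ \phi_1$ is a composition of maps between metric spaces, then $\|\phi\|_{Lip} \le \prod_{j=1}^m \|\phi_j\|_{Lip}$, which follows immediately by iterating the defining inequality $d_y(\phi_j(\va),\phi_j(\vb)) \le \|\phi_j\|_{Lip}\, d_x(\va,\vb)$. Each layer of a normalized DNN factors as $\vu_{i-1} \;\mapsto\; \mW_i \vu_{i-1} \;\mapsto\; \NO_i(\mW_i \vu_{i-1}) \;\mapsto\; g_i\!\big(\NO_i(\mW_i \vu_{i-1})\big) = \vu_i$, i.e. as the composition of the linear map $\vu \mapsto \mW_i\vu$, the normalizer $\NO_i$, and the activation $g_i$.

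First I would record the Lipschitz constant of each of the three factors, all measured in the max norm fixed in the Notations: the linear map satisfies $\|\mW_i\vu\|_{Lip} = \|\mW_i\|$ (the induced $\infty$-norm formula stated earlier); the normalizer has Lipschitz constant $\|\NO_i\|_{Lip}$ by definition, a quantity already estimated in Lemma~\ref{lem-BN-Jacobian-norm} for $\BN$ and bounded in Lemmas~\ref{lem-LN-Jacobian-norm}--\ref{lem-GN-Jacobian-norm} for $\LN$ and $\GN$; and $g_i$ is $1$-Lipschitz by the hypothesis in Definition~\ref{def-normalized-FCN}. Multiplying, the $i$-th layer map $\vu_{i-1}\mapsto\vu_i$ is $\big(\|\mW_i\|\cdot\|\NO_i\|_{Lip}\big)$-Lipschitz. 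Then, writing $\vh^{(i)}_{no}$ for the sub-network $\vx \mapsto \vu_i$, the base case $i=0$ is the identity with Lipschitz constant $1$ (empty product), and the inductive step gives $\|\vh^{(i)}_{no}\|_{Lip} \le \|\vh^{(i-1)}_{no}\|_{Lip}\cdot\|\mW_i\|\cdot\|\NO_i\|_{Lip}$. Unfolding to $i=K$ yields $\|\vh_{no},\vx\|_{Lip} \le \prod_{i=1}^{K}\|\mW_i\| \cdot \prod_{i=1}^{K}\|\NO_i\|_{Lip} = P_w(\vh_{no})\prod_{k=1}^{K}\|\NO_k\|_{Lip}$, and since $\|\mW_i\|\le s_i$ for every member of $\gH_{no}$, the same bound holds uniformly over the family.

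The computation itself is routine; the only point needing care — and the nearest thing to an obstacle — is the precise status of $\NO_i$ as a map. For $\BN$ the per-coordinate scaling $1/\sqrt{\sigma_k^2+\epsilon}$ uses the population statistics of the pre-activation $(\mW_i\vu_{i-1})_k$, which themselves depend on the distribution of $\vx$ and on the earlier weights; one must therefore fix the network first, so that each $\NO_i$ is a genuine fixed function of its input vector with a well-defined Lipschitz constant, and the composition rule applies. This is exactly the convention already adopted in the discussion following Lemma~\ref{lem-BN-Jacobian-norm}, so once it is made explicit nothing further is required. A secondary, cosmetic matter is keeping the two products indexed consistently so that the right-hand side matches the stated expression $P_w(\vh_{no})\prod_{k=1}^{K}\|\NO_k\|_{Lip}$.
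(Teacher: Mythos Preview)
Your proposal is correct and is essentially the same argument as the paper's: both iterate the composition rule for Lipschitz constants layer by layer, using $\|g_i\|_{Lip}\le 1$, $\|\NO_i\|_{Lip}$, and $\|\mW_i\vu\|_{Lip}=\|\mW_i\|$ to peel off one factor of $\|\mW_i\|\cdot\|\NO_i\|_{Lip}$ per layer. The only cosmetic difference is direction (you induct forward from $i=0$ while the paper unrolls from $K$ downward), and your remark on fixing the network so that each $\NO_i$ is a well-defined map is a helpful clarification that the paper leaves implicit.
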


Consider the case of $\BN$.  \cite{ioffe2015BNorm} observed that the distribution of input at a layer may vary significantly and hence cause some difficulties for training. The high varying inputs imply a high variance of the input distribution, i.e., a large $\sigma_{x}$ in (\ref{eq-BN-01}). $\BN$ was proposed to resolve the issue of high variances, possibly helping training more easily. The following results   provide a novel perspective, which comes from combining Theorem~\ref{lem-Lipschitz-FCN-normalize} and Lemma~\ref{lem-BN-Jacobian-norm}.

\begin{corollary}[DNN+BN] \label{cor-FCN-Lipschitz-constant-BN}
Consider  any $\vh_{no} \in \gH_{no}$ in Definition~\ref{def-normalized-FCN} with  operator $\NO_k(\cdot) \equiv \BN(\cdot, \epsilon)$  at any layer $k \le K$. Let $\boldsymbol{\sigma}_k$ be the input variances at layer $k$ as defined in Lemma~\ref{lem-BN-Jacobian-norm}. Then:

1. $ \left\| \vh_{no},\vx \right\|_{Lip}   \le P_w(\vh_{no}) \prod_{k=1}^{K} \|1/{\boldsymbol{\sigma}_k}\| $. 

2. $ \left\| \vh_{no}, \vx \right\|_{Lip}   \le \sigma^{-K} P_w(\vh_{no})$, where $\sigma = \min_k \| 1/\boldsymbol{\sigma}_k \|^{-1}$.
\end{corollary}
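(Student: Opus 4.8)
The plan is to derive Corollary~\ref{cor-FCN-Lipschitz-constant-BN} directly by specializing Theorem~\ref{lem-Lipschitz-FCN-normalize} to the case where every normalization operator is a batch normalization layer. First I would invoke Theorem~\ref{lem-Lipschitz-FCN-normalize}, which gives $\left\| \vh_{no}, \vx \right\|_{Lip} \le P_w(\vh_{no}) \prod_{k=1}^{K} \| \NO_k \|_{Lip}$ for any $\vh_{no} \in \gH_{no}$. Since $\NO_k(\cdot) \equiv \BN(\cdot, \epsilon)$ at every layer $k \le K$, I would then apply Lemma~\ref{lem-BN-Jacobian-norm} at each layer: for the input at layer $k$, whose coordinatewise population variances form the vector $\boldsymbol{\sigma}_k$, that lemma yields $\| \BN \|_{Lip} = \| 1/\boldsymbol{\sigma}_k \|$ where $1/\boldsymbol{\sigma}_k$ is the vector of entries $1/\sqrt{\sigma_{k,j}^2 + \epsilon}$. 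Substituting these into the product gives $\left\| \vh_{no}, \vx \right\|_{Lip} \le P_w(\vh_{no}) \prod_{k=1}^{K} \| 1/\boldsymbol{\sigma}_k \|$, which is part~1.

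For part~2, I would bound each factor $\| 1/\boldsymbol{\sigma}_k \|$ uniformly. Writing $\sigma$ for the smallest value of $\sqrt{\sigma_{k,j}^2 + \epsilon}$ over all layers $k$ and all coordinates $j$ — which is exactly what the stated definition $\sigma = -\max_k \| -\boldsymbol{\sigma}_k \|$ encodes, since $\| -\boldsymbol{\sigma}_k \| = \max_j \sqrt{\sigma_{k,j}^2 + \epsilon}$ under the convention of writing the normalization vector that way, and taking $-\max_k$ of the negated max-norms recovers the global minimum — we get $\| 1/\boldsymbol{\sigma}_k \| = \max_j 1/\sqrt{\sigma_{k,j}^2 + \epsilon} = 1/(\min_j \sqrt{\sigma_{k,j}^2 + \epsilon}) \le 1/\sigma$ for every $k$. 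Therefore $\prod_{k=1}^{K} \| 1/\boldsymbol{\sigma}_k \| \le \sigma^{-K}$, and combining with part~1 gives $\left\| \vh_{no}, \vx \right\|_{Lip} \le \sigma^{-K} P_w(\vh_{no})$.

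The main obstacle here is not any substantive mathematics — both ingredients are already in hand — but rather pinning down precisely what the notation $\sigma = -\max_k \| -\boldsymbol{\sigma}_k \|$ means and verifying it coincides with the intended global minimum standard deviation (shifted by $\epsilon$), since the double-negation-through-the-max-norm trick is a compact but slightly opaque way of writing $\min_k \min_j \sqrt{\sigma_{k,j}^2 + \epsilon}$. Once that identification is made explicit, the chain of inequalities is immediate. I would therefore spend a sentence at the start of the proof unpacking this notation, then let parts~1 and~2 follow in two short displayed steps. The whole argument is essentially a substitution plus a uniform bound, so the proof should be only a few lines; the conceptual content lives entirely in Theorem~\ref{lem-Lipschitz-FCN-normalize} and Lemma~\ref{lem-BN-Jacobian-norm}, and the exponential rate $\sigma^{-K}$ is simply the product of $K$ identical per-layer bounds.
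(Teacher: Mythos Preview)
Your approach is correct and matches the paper's exactly: the paper states that Corollary~\ref{cor-FCN-Lipschitz-constant-BN} ``comes from combining Theorem~\ref{lem-Lipschitz-FCN-normalize} and Lemma~\ref{lem-BN-Jacobian-norm},'' which is precisely the substitution-then-uniform-bound argument you outline. One small caution on the notational aside: under the standard max norm $\|\vv\| = \max_j |v_j|$, your intermediate claim $\|{-}\boldsymbol{\sigma}_k\| = \max_j \sqrt{\sigma_{k,j}^2+\epsilon}$ followed by ``$-\max_k$ recovers the global minimum'' is inconsistent (it would give a negative number), so when you write up the unpacking you should simply assert that the intended reading is $\sigma = \min_{k,j}\sqrt{\sigma_{k,j}^2+\epsilon}$ and proceed, since that is the only value for which the bound $\|1/\boldsymbol{\sigma}_k\| \le 1/\sigma$ holds for all $k$.
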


This corollary indicates that $\BN$ can have a strong penalty on family $\gH_{no}$. As the input variance increases, $\| \vh_{no}, \vx \|_{Lip} $ tends to be smaller, meaning that  the penalty  tends to be stronger. Note that  many members of the unnormalized family $\gH$ have a Lipschitz constant of size  $\Omega(\prod_{k=1}^{K} s_k)$, as shown by Theorem~\ref{thm-Lipschitz-FCN-lower-bound}. In contrast, the Lipschitz constant of every member of $\gH_{no}$ is atmost $(\prod_{k=1}^{K} s_k)$ $\prod_{k=1}^{K} \|1/{\boldsymbol{\sigma}_k}\| $. This suggests that $\BN$ can significantly reduce the Lipschitz constant, and hence makes the normalized DNNs smoother than their unnormalized versions.

The following result comes from combining Theorem~\ref{lem-Lipschitz-FCN-normalize} with  Lemma~\ref{lem-LN-Jacobian-norm}. 

\begin{corollary}[DNN+LN] \label{cor-FCN-Lipschitz-constant-LN}
Consider  any $\vh_{no} \in \gH_{no}$ in Definition~\ref{def-normalized-FCN}  with  operator $\NO_k(\cdot) \equiv \LN(\cdot, \epsilon)$  at any layer $k \le K$. Denote $n_k$ as the size of the input and $\sigma_{k}$ as input variance  before $\NO_k$, defined in Lemma~\ref{lem-LN-Jacobian-norm}. Then  
$ \left\| \vh_{no}, \vx \right\|_{Lip}   \le  P_w(\vh_{no}) \prod_{k=1}^{K}  \sigma_{k}^{-1}$.
\end{corollary}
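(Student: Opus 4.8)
The plan is to combine Theorem~\ref{lem-Lipschitz-FCN-normalize} with Lemma~\ref{lem-LN-Jacobian-norm} in the most direct way. Theorem~\ref{lem-Lipschitz-FCN-normalize} already gives, for any $\vh_{no} \in \gH_{no}$ with normalizers $\NO_1, \dots, \NO_K$, the bound $\| \vh_{no}, \vx \|_{Lip} \le P_w(\vh_{no}) \prod_{k=1}^{K} \|\NO_k\|_{Lip}$. So the whole task reduces to substituting the specific estimate of $\|\NO_k\|_{Lip}$ when $\NO_k(\cdot) \equiv \LN(\cdot, \epsilon)$ acting on an input of size $n_k$ whose population variance (in the $\LN$ sense of Eq.~(\ref{eq-LN-02})) is controlled by $\sigma_k$.

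First I would fix a layer $k$ and identify exactly which vector $\LN$ is being applied to in the normalized network of Definition~\ref{def-normalized-FCN}: it is $\mW_k \vu_{k-1}$, a vector in $\R^{n_k}$. Lemma~\ref{lem-LN-Jacobian-norm}, applied with $n = n_k$ and with $\sigma = \sigma_k := \min_{\vx \in \R^{n_k}} \sqrt{\sigma_{n_k}^2(\vx) + \epsilon}$ (which is precisely the quantity the corollary calls ``input variance $\sigma_k$ before $\NO_k$''), yields $\|\NO_k\|_{Lip} = \|\LN\|_{Lip} \le (1 - \tfrac{1}{n_k})/\sigma_k$. This bound is uniform over all possible inputs to layer $k$, hence in particular holds for $\mW_k \vu_{k-1}$ regardless of $\vx$ and the weight matrices, so it is legitimate to use it inside the product bound of Theorem~\ref{lem-Lipschitz-FCN-normalize}.

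Then I would simply multiply: $\| \vh_{no}, \vx \|_{Lip} \le P_w(\vh_{no}) \prod_{k=1}^{K} \|\NO_k\|_{Lip} \le P_w(\vh_{no}) \prod_{k=1}^{K} (1 - \tfrac{1}{n_k})/\sigma_k$, which is exactly the claimed inequality. One minor point to state carefully is that the activation functions $g_k$ are $1$-Lipschitz (Definition~\ref{def-normalized-FCN}) and the linear maps $\mW_k$ have Lipschitz constant $\|\mW_k\|$, so these factors are already accounted for in $P_w(\vh_{no}) = \prod_{k=1}^K \|\mW_k\|$; this is what Theorem~\ref{lem-Lipschitz-FCN-normalize} packages, so no re-derivation is needed.

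There is essentially no main obstacle here: the result is a straightforward corollary, and the only thing requiring a little care is bookkeeping — making sure the ``$\sigma_k$'' in the corollary is interpreted as the layer-$k$ minimizer from Lemma~\ref{lem-LN-Jacobian-norm} (a lower bound on $\sqrt{\sigma_{n_k}^2 + \epsilon}$ over all admissible pre-normalization activations), so that $(1-\tfrac{1}{n_k})/\sigma_k$ is a valid upper bound on $\|\NO_k\|_{Lip}$ for that layer. If one wanted the cleaner-looking consequence, one could additionally note $1 - \tfrac{1}{n_k} < 1$ and $\sigma_k \ge \sqrt{\epsilon}$, giving the depth-exponential bound $\| \vh_{no}, \vx \|_{Lip} \le P_w(\vh_{no})\,\epsilon^{-K/2}$, but that is optional and the stated inequality follows immediately from the two cited results.
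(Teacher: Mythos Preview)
Your proposal is correct and matches the paper's own approach exactly: the paper states that this corollary ``comes from combining Theorem~\ref{lem-Lipschitz-FCN-normalize} with Lemma~\ref{lem-LN-Jacobian-norm},'' and you carry out precisely that combination, including the correct bookkeeping on what $\sigma_k$ means at each layer.
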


From Corollaries~\ref{cor-FCN-Lipschitz-constant-BN} and~\ref{cor-FCN-Lipschitz-constant-LN}, one can observe that $\BN$ and $\LN$ (and, by extension, $\GN$ and $\IN$) share several theoretical properties. Under the assumption of large input variances, both normalization mechanisms impose strong constraints on the Lipschitz constant of a deep neural network, thereby effectively controlling the overall model complexity and capacity of the family~$\gH$. Importantly, this assumption is not purely theoretical but has been widely observed in practice, as reported in numerous empirical studies \citep{de2020BN,shao2020normalization,ioffe2015BNorm}.

\textit{Empirical evidence of large input variances:}  
To further validate this assumption, we trained a 10-layer ReLU network and a ResNet18 model on the CIFAR-10 dataset and monitored the input variance at each layer throughout training. Figure~\ref{fig:variance-before-normalization} presents the recorded statistics. Owing to the use of He initialization, both models initially exhibit small input variances across all layers, allowing training to progress smoothly during the early mini-batches. However, as training continues, we observe that the input variances at several layers increase rapidly---often exceeding~1 by a large margin. In some layers, the variance continues to grow steadily even after the training error approaches zero, suggesting that activations become increasingly variable as training progresses. The same behavior also appears in EfficientNet-B3 as reported by Figure~\ref{fig:preBN-variance-Norm-EfficientNet} in Appendix~\ref{app-empirical-evaluations}.

\begin{figure}[t]
	\begin{minipage}[b]{0.46\textwidth} 
        \centering
        \includegraphics[width=\textwidth]{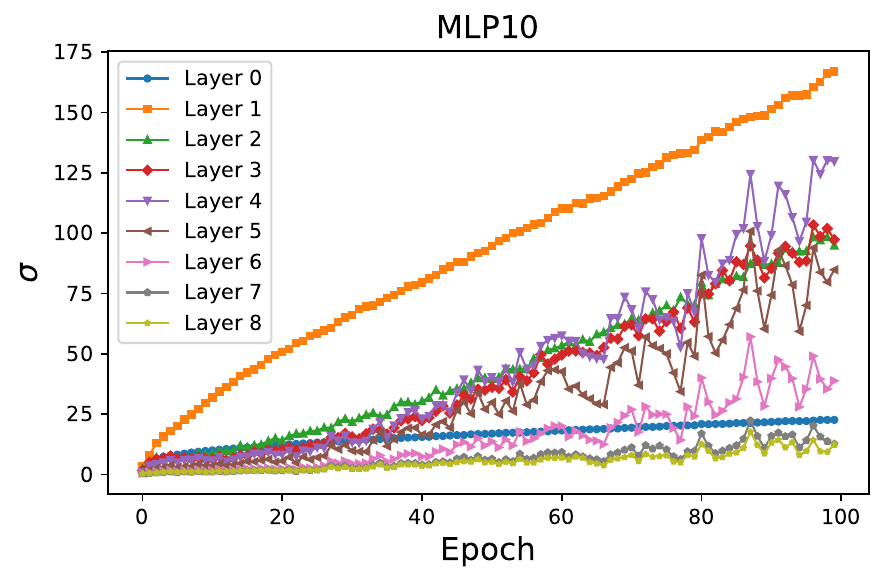} 
    \end{minipage}
    \hfill
    \begin{minipage}[b]{0.3\textwidth} 
            \centering
            \includegraphics[width=\textwidth]{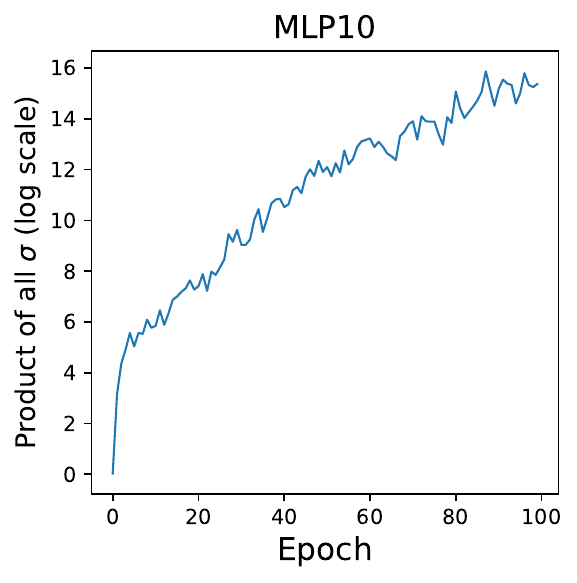} 
        \end{minipage}
        \hfill
     \begin{minipage}[b]{0.46\textwidth} 
             \centering
             \includegraphics[width=\textwidth]{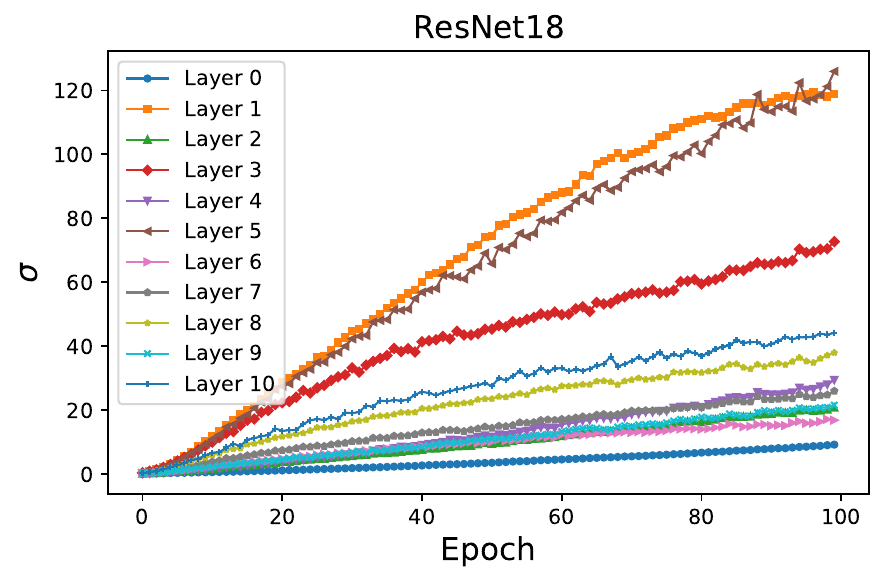}
         \end{minipage}
         \hfill
    \begin{minipage}[b]{0.3\textwidth} 
            \centering
            \includegraphics[width=\textwidth]{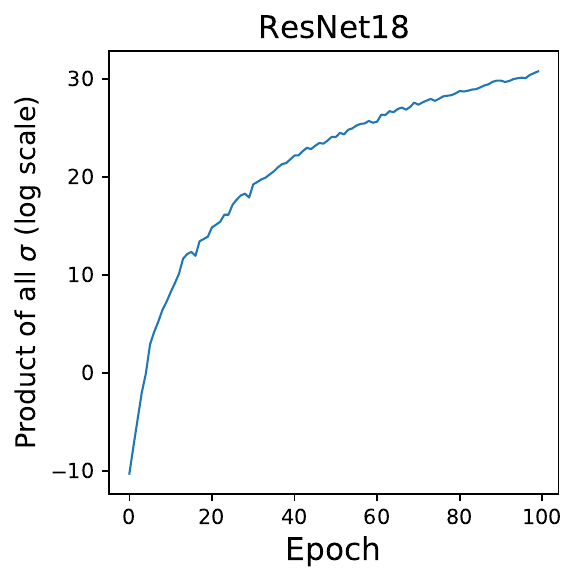}
        \end{minipage}
\caption{Evolution of input variances across layers in a 10-layer ReLU network and ResNet18 trained on the CIFAR-10 dataset. The input variance $(\sigma^2)$ is computed over mini-batches before each activation function (or before each $\BN$ layer in ResNet18). For ResNet18, only the dynamics of the first 11 $\BN$ layers are presented for clarity.  Although both networks are initialized using He initialization, several layers exhibit a rapid increase in input variance during training. Consequently, the cumulative product of layer-wise variances grows approximately exponentially.} 
\label{fig:variance-before-normalization}
\end{figure}

\textbf{Potentially exponential reduction due to normalization:}  
When such large input variances occur, normalization layers can dramatically reduce the network's effective Lipschitz constant. According to Corollary~\ref{cor-FCN-Lipschitz-constant-BN}, $\BN$ yields a reduction factor of order $O\!\left(\prod_{k=1}^{K} \|1/{\boldsymbol{\sigma}_k}\|\right)$, while Corollary~\ref{cor-FCN-Lipschitz-constant-LN} indicates that $\LN$ provides a reduction of order $O\!\left(\prod_{k=1}^{K} 1/\sigma_k\right)$. As shown in Figure~\ref{fig:variance-before-normalization}, the values of $\sigma_k$ are typically much greater than~1 after training. Consequently, the cumulative reduction factors can be extremely large in magnitude---potentially exponential in the number of normalization layers. For instance, in ResNet18 with  20 $\BN$ layers, we empirically observe \textcolor{blue}{$\prod_{k=1}^{K} 1/{\sigma_k} \approx 10^{-25}$}. Such a reduction is remarkably substantial and highlights the profound stabilizing effect of normalization on deep networks.

\subsection{Smoothening the loss landscape} \label{subsec-smoothen-loss-landscape}

We next consider the effects of a normalizer to the loss when  training a DNN. To this end, we need to analyze the behaviors of a DNN w.r.t its weights. By using similar proofs with Theorem~\ref{lem-Lipschitz-FCN-normalize} and Theorem~\ref{thm-Lipschitz-FCN-lower-bound}, one can easily show the following results.

\begin{theorem} \label{thm-Lipschitz-FCN-normalize-weight}
Consider any member $ \vh_{no} \in \gH_{no}$  defined in Definition~\ref{def-normalized-FCN}. Denote $\left\| \vh_{no}, \mW_i \right\|_{Lip}$ as the Lipschitz constant of  $\vh_{no}$ with respect to the weight $\mW_i$, $\left\| \vh_{no}, \vy_i \right\|_{Lip}$ as the Lipschitz constant of  $\vh_{no}$ w.r.t to the total input $\vy_i = \mW_i \vh_{i-1}$ at layer $i \le K$, $b_{i} = \|\mW_{i} \| \cdots \| \mW_K \|$ and $A_i = \sup_{\vx} \| \vh_{i}(\vx) \|$ as the largest norm of the output of layer $i$. Then
\begin{eqnarray}
\label{thm-Lipschitz-FCN-normalize-weight-01}
\left\| \vh_{no}, \mW_i \right\|_{Lip}  \le  A_{i-1} \left\| \vh_{no}, \vy_i \right\|_{Lip} 
\end{eqnarray}
\begin{eqnarray}
\label{thm-Lipschitz-FCN-normalize-weight-02}
\left\| \vh_{no}, \vy_i \right\|_{Lip}  \le \begin{cases}
\| \NO_K\|_{Lip} & \text{if } i = K \\
 b_{i+1} \prod_{k=i}^{K} \| \NO_k\|_{Lip} & \text{otherwise}
\end{cases} 
\end{eqnarray}
\end{theorem}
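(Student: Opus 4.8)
The plan is to prove both inequalities with the same ``chain rule for Lipschitz constants'' that underlies Theorem~\ref{lem-Lipschitz-FCN-normalize}: the Lipschitz constant of a composition of maps is at most the product of the individual Lipschitz constants. The only new twist is that we ``cut'' the network at layer $i$ rather than at the input $\vx$, and treat the part from layer $i$ onwards as a function either of $\mW_i$ or of the pre-normalization total input $\vy_i=\mW_i\vh_{i-1}$.

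\textbf{Inequality~(\ref{thm-Lipschitz-FCN-normalize-weight-01}).} Fix the input $\vx$ and every weight matrix except $\mW_i$. Then $\vh_{i-1}$, the output of layer $i-1$ (the $\vu_{i-1}$ of Definition~\ref{def-normalized-FCN}), depends only on $\vx$ and $\mW_1,\dots,\mW_{i-1}$, so in this situation it is a fixed vector with $\|\vh_{i-1}\|\le A_{i-1}$. View $\vh_{no}$ as the composition $\mW_i\mapsto \vy_i=\mW_i\vh_{i-1}\mapsto \vh_{no}$, where the second map is exactly the sub-network whose Lipschitz constant with respect to its argument $\vy_i$ is $\|\vh_{no},\vy_i\|_{Lip}$. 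The first map is linear in $\mW_i$, and by the induced-$\infty$-norm inequality $\|(\mW_i-\mW_i')\vh_{i-1}\|\le\|\mW_i-\mW_i'\|\,\|\vh_{i-1}\|$ recorded in the Notations, its Lipschitz constant is at most $\|\vh_{i-1}\|\le A_{i-1}$. Composing the two Lipschitz bounds gives $\|\vh_{no},\mW_i\|_{Lip}\le A_{i-1}\,\|\vh_{no},\vy_i\|_{Lip}$.

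\textbf{Inequality~(\ref{thm-Lipschitz-FCN-normalize-weight-02}).} Write $\vh_{no}$ as a function of $\vy_i$ through the explicit composition: first $\vy_i\mapsto \vh_i=g_i(\NO_i(\vy_i))$, and then, for $k=i+1,\dots,K$, the layer map $\vh_{k-1}\mapsto \vh_k=g_k(\NO_k(\mW_k\vh_{k-1}))$, with $\vh_K=\vh_{no}$. Since each $g_k$ has Lipschitz constant at most $1$ by Definition~\ref{def-normalized-FCN}, the first map has Lipschitz constant at most $\|\NO_i\|_{Lip}$, and for each $k>i$ the $k$-th layer map has Lipschitz constant at most $\|\mW_k\|\,\|\NO_k\|_{Lip}$ (bounding the linear part by its induced $\infty$-norm $\|\mW_k\|$). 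Multiplying these $K-i+1$ factors yields $\|\vh_{no},\vy_i\|_{Lip}\le\bigl(\prod_{k=i}^{K}\|\NO_k\|_{Lip}\bigr)\bigl(\prod_{k=i+1}^{K}\|\mW_k\|\bigr)$, which is the ``otherwise'' branch. When $i=K$ the product over weight norms is empty and the composition reduces to the single map $g_K\circ\NO_K$, giving $\|\vh_{no},\vy_K\|_{Lip}\le\|\NO_K\|_{Lip}$.

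\textbf{Expected difficulty.} There is no genuine obstacle here; the argument is essentially routine and mirrors the proof of Theorem~\ref{lem-Lipschitz-FCN-normalize}. The only points needing a little care are: (i) keeping the matrix norm consistently equal to the induced $\infty$-norm $\|\mA\|=\max_k\sum_j|A_{kj}|$, which is what makes both $\|H\|_{Lip}\le\|\vh_{i-1}\|$ and the per-layer factor $\|\mW_k\|$ valid under the max-norm metric; and (ii) observing that $A_{i-1}=\sup_{\vx}\|\vh_{i-1}(\vx)\|$ dominates $\|\vh_{i-1}\|$ for the fixed configuration used when computing $\|\vh_{no},\mW_i\|_{Lip}$ (over admissible earlier weights as well, if $A_{i-1}$ is understood to range over them).
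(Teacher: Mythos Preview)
Your proposal is correct and follows essentially the same approach as the paper: both use the submultiplicativity of Lipschitz constants for compositions, factoring $\vh_{no}$ through $\vy_i$ to get (\ref{thm-Lipschitz-FCN-normalize-weight-01}) via $\|\vy_i,\mW_i\|_{Lip}\le\sup\|\vh_{i-1}\|=A_{i-1}$, and then bounding $\|\vh_{no},\vy_i\|_{Lip}$ layer by layer using $\|g_k\|_{Lip}\le 1$. The only cosmetic difference is that the paper peels layers off from the top ($K$ down to $i$) whereas you build the composition from layer $i$ upward; the resulting product of factors is identical.
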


This theorem reveals that a normalization operator can have a direct control on the behaviors of many parts of a DNN. When the input variances at a layer is large, such a control will be stronger and a normalizer can significantly smoothen those parts of a DNN. Such a control should be crucial for the training process, since the training loss should be smooth in a controllable way. On the other side, an unnormalized network can be very bad as revealed below, whose proof appears in Appendix~\ref{app-DNN-proofs-no-norm}. 

\begin{theorem}[Bad unnormalized DNNs] \label{thm-Lipschitz-FCN-lower-bound-weight}
Consider the family $\gH$  defined in Definition~\ref{def-FCN} with ReLU activations and notations in Theorem~\ref{thm-Lipschitz-FCN-normalize-weight}. For all $i \in [1,K)$ and all  $a_j \in [0, s_j]$ with $j >i$, there exists a member $\vh^* \in \gH$ satisfying: 
\begin{eqnarray}
\label{thm-Lipschitz-FCN-lower-bound-weight-001}
\left\| \vh^*(\vx), \mW_i \right\|_{Lip}  = a_{i+1} \cdots a_K  \left\| \left( \vh_{i-1}(\vx)\right) ^{(n_{i+1},...,n_K)}\right\|, \forall \vx,
\end{eqnarray}
\begin{eqnarray}
\label{thm-Lipschitz-FCN-lower-bound-weight-002}
\left\| \vh^*, \vy_i \right\|_{Lip}  = a_{i+1} \cdots a_K 
\end{eqnarray}
\end{theorem}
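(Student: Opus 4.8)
The plan is to imitate the construction behind the lower bound in Theorem~\ref{thm-Lipschitz-FCN-lower-bound}: build a ReLU network whose layers beyond $i$ each act as a pure nonnegative scaling followed by a coordinate truncation/zero-padding, so that the map sending $\vy_i$ (respectively $\mW_i$) to the output $\vh_K$ is a composition of elementary maps whose Lipschitz constants multiply exactly. Concretely, for each $j\in\{i+1,\dots,K\}$ I set $\mW_j=a_jT_j$, where $T_j$ is the canonical truncation/zero-padding matrix with $T_j\vv=\vv^{(n_j)}$; since every row of $T_j$ has at most one nonzero entry, equal to $1$, we get $\|\mW_j\|=a_j\le s_j$, so the resulting network lies in $\gH$. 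For the layers $j\le i-1$ I keep any admissible weights, except that (when the widths are not non-decreasing) I arrange $\mW_{i-1}$ to have zero rows beyond index $\min(n_{i-1},n_{i+1},\dots,n_K)$, which forces $\vh_{i-1}(\vx)$ to be supported on exactly those first coordinates; in the common constant-width case this step is vacuous and all the truncation superscripts below become identities.

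Next I run the forward pass from layer $i$ onward. Since $\rect$ is nonnegative-valued, each $T_j$ preserves nonnegativity, and $a_j\ge0$, the activation acts as the identity at every layer $j>i$: $\vh_{i+1}=\rect(a_{i+1}T_{i+1}\vh_i)=a_{i+1}(\vh_i)^{(n_{i+1})}$, and a short induction gives
\begin{equation}
\nonumber
\vh_K \;=\; \Big(\textstyle\prod_{k=i+1}^{K}a_k\Big)\big(\vh_i\big)^{(n_{i+1},\dots,n_K)} \;=\; \Big(\textstyle\prod_{k=i+1}^{K}a_k\Big)\big(\rect(\mW_i\vh_{i-1})\big)^{(n_{i+1},\dots,n_K)}.
\end{equation}
Viewed as a function of $\vy_i=\mW_i\vh_{i-1}$ alone, the whole network is $\vy_i\mapsto c\,(\rect(\vy_i))^{(n_{i+1},\dots,n_K)}$ with $c=\prod_{k=i+1}^{K}a_k$.

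Both identities then follow from an upper bound matched by an explicit two-point witness. For the upper bounds, $\rect$ is $1$-Lipschitz in the max norm, each truncation/zero-padding is $1$-Lipschitz (truncation only shrinks the max norm, padding leaves it unchanged), and $\mW_i\mapsto\mW_i\vh_{i-1}$ is $\|\vh_{i-1}\|$-Lipschitz for the induced $\infty$-norm on $\mW_i$ (the standard bound $\|(\mW_i-\mW_i')\vh_{i-1}\|\le\|\mW_i-\mW_i'\|\,\|\vh_{i-1}\|$); composing these factors gives $\|\vh^*,\vy_i\|_{Lip}\le c$ and $\|\vh^*(\vx),\mW_i\|_{Lip}\le c\,\|\vh_{i-1}(\vx)\|$, the latter equal to $c\,\|(\vh_{i-1}(\vx))^{(n_{i+1},\dots,n_K)}\|$ by the support choice. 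For the lower bounds I use point pairs: $\vy_i\in\{\vzero,\ t\,\ve_1\}$ with $t>0$ yields $\vh_K\in\{\vzero,\ c\,t\,\ve_1\}$ (the first coordinate survives every truncation because all widths are $\ge1$), so $\|\vh^*,\vy_i\|_{Lip}=c$; and $\mW_i\in\{\vzero,\ \delta\,\sign(w_{\ell^*})\,\ve_{k^*}\ve_{\ell^*}^{\top}\}$, where $\ell^*$ indexes a largest-magnitude coordinate of $\vw=\vh_{i-1}(\vx)$ and $k^*$ is any output coordinate surviving the chained truncations, gives $\vh_i=\delta|w_{\ell^*}|\,\ve_{k^*}$ (nothing is clipped by $\rect$) and hence $\vh_K=c\,\delta|w_{\ell^*}|\,\ve_{k^*}$, so the difference quotient equals $c\,\|\vw\|=c\,\|(\vh_{i-1}(\vx))^{(n_{i+1},\dots,n_K)}\|$.

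The hard part is not conceptual but is the coordinate bookkeeping: one must identify exactly which coordinates of $\vh_i=\rect(\mW_i\vh_{i-1})$ survive the composite truncation $(\cdot)^{(n_{i+1},\dots,n_K)}$, check that at least one survives (so a witnessing perturbation is available), and --- for the $\mW_i$ identity --- reconcile the witnessed value $c\,\|\vh_{i-1}(\vx)\|$ with the truncated norm appearing in the statement, which is precisely what dictates the support arrangement of $\vh_{i-1}(\vx)$ imposed through the upstream weights; in the constant-width regime the two norms coincide and this subtlety disappears. A minor point is keeping $\delta$ small enough that the perturbed $\mW_i$ stays in $\{\|\mW_i\|\le s_i\}$ and on the intended side of the ReLU kink, both automatic for $\delta$ sufficiently small.
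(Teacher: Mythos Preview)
Your construction and forward-pass computation are identical to the paper's: the paper sets $\mW_k = a_k \mI_k$ for all $k>i$, with $\mI_k$ exactly your truncation/zero-padding matrix $T_k$, and derives the same closed form $\vh_K = a_K\cdots a_{i+1}\,\mathrm{ReLU}\!\big(\vy_i^{(n_{i+1},\dots,n_K)}\big)$ before reading off the two Lipschitz constants. Where you differ is only in rigor: the paper simply asserts the final equalities (citing $\|\mW\vx,\mW\|_{Lip}=\|\vx\|$ and the truncation/padding structure) without exhibiting witnesses, whereas you supply explicit upper bounds plus two-point lower bounds, and you additionally flag and patch---via the support arrangement on $\mW_{i-1}$---the gap between the natural value $\|\vh_{i-1}(\vx)\|$ and the truncated norm $\|(\vh_{i-1}(\vx))^{(n_{i+1},\dots,n_K)}\|$, a subtlety the paper's last line does not address.
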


This theorem suggests that the Lipschitz constant of an unnormalized DNN w.r.t. the weights  at layer $i$ can be of size $\Omega(s_{i+1} \cdots s_K)$. Such a number can be huge in practice. Indeed, Figure~\ref{fig:MLP-weight-norms} demonstrates how large the weight norms in a FFN could be. This figure suggests that each $s_i$ could be much larger than 1, and hence the product $s_{i+1} \cdots s_K$ could be often exponentially large  in practice. Meanwhile, a suitable use of normalizers at the layers with large input variances can significantly reduce the Lipschitz constant, as pointed out by Theorem~\ref{thm-Lipschitz-FCN-normalize-weight}.

Next we consider the loss  for  training a DNN. For simplicity, we consider the $\ell_1$ loss, which is $f(\vh, \vx) = \| \hat{\vy}(\vx) - \vh(\vx) \|_1$ for measuring the error of a prediction $\vh(\vx)$ at  a given input $\vx$ whose the true label is $\hat{\vy}(\vx)$. Observe that $\| f, \mW_i \|_{Lip} = \left\| \vh, \mW_i \right\|_{Lip}$. Combining this observation with Theorem~\ref{thm-Lipschitz-FCN-normalize-weight} and Theorem~\ref{thm-Lipschitz-FCN-lower-bound-weight} leads to the followings.

\begin{corollary}[Lipschitz constant of the loss] \label{cor-FCN-Lipschitz-constant-loss}
Consider  the family $\gH$  in Definition~\ref{def-FCN} with ReLU activations, family $\gH_{no}$ in Definition~\ref{def-normalized-FCN}, and the $\ell_1$  loss $f$. For any $i \in [1,K)$, any $\vx$  and  $\vh_{no} \in \gH_{no}$,
\begin{equation}
 \label{cor-FCN-Lipschitz-constant-loss-01}
\left\| f(\vh_{no}, \vx), \mW_i \right\|_{Lip} \le O(b_{i+1})  \prod_{k=i}^{K} \| \NO_k\|_{Lip}
\end{equation}
where $b_{i} = \|\mW_{i} \| \cdots \| \mW_K \|$. For all  $a_j \in [0, s_j]$ with $j > i$, there exists a member $\vh^* \in \gH$ satisfying:
\begin{equation}
 \label{cor-FCN-Lipschitz-constant-loss-02}
\left\| f(\vh^*, \vx), \mW_i \right\|_{Lip} = a_{i+1} \cdots a_K  \left\| \left( \vh_{i-1}(\vx)\right) ^{(n_{i+1},...,n_K)}\right\|
\end{equation}
\end{corollary}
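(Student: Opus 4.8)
# Proof Proposal for Corollary~\ref{cor-FCN-Lipschitz-constant-loss}

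The plan is to reduce everything to the two preceding results, Theorem~\ref{thm-Lipschitz-FCN-normalize-weight} and Theorem~\ref{thm-Lipschitz-FCN-lower-bound-weight}, by first establishing the key observation that differentiating the $\ell_1$ loss $f(\vh,\vx) = \|\hat{\vy}(\vx) - \vh(\vx)\|_1$ with respect to a weight matrix $\mW_i$ does not change the Lipschitz constant, i.e. $\|f(\vh,\vx),\mW_i\|_{Lip} = \|\vh,\mW_i\|_{Lip}$. First I would fix $\vx$ (and hence the true label $\hat{\vy}(\vx)$) and view both $\vh(\vx)$ and $f(\vh,\vx)$ as functions of $\mW_i$ with all other weight matrices held fixed. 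Since $\hat{\vy}(\vx)$ is a constant with respect to $\mW_i$, the map $\mW_i \mapsto \hat{\vy}(\vx) - \vh(\vx)$ is just a translate of $\mW_i \mapsto -\vh(\vx)$, so it has the same modulus of continuity; post-composing with the norm $\|\cdot\|_1$ (which is $1$-Lipschitz from the $\ell_1$ metric on $\R^m$ to $\R$, and here we are measuring the output in $\R$ so there is no metric mismatch to worry about — the $\ell_1$ norm differs from the max norm only by a dimensional constant that I should track or absorb into the $O(\cdot)$) can only contract distances. Hence $\|f(\vh,\vx),\mW_i\|_{Lip} \le \|\vh,\mW_i\|_{Lip}$, and the reverse inequality follows because the translation and the norm composition are invertible up to the same constants; this is exactly the observation already asserted in the paragraph preceding the corollary, so I would state it as a one-line lemma and move on.

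Next, for the upper bound~\eqref{cor-FCN-Lipschitz-constant-loss-01}, I would chain the observation with~\eqref{thm-Lipschitz-FCN-normalize-weight-01} and the ``otherwise'' branch of~\eqref{thm-Lipschitz-FCN-normalize-weight-02} applied to $\vh_{no} \in \gH_{no}$: since $i \in [1,K)$ we are never in the $i = K$ case, so
\[
\|f(\vh_{no},\vx),\mW_i\|_{Lip} = \|\vh_{no},\mW_i\|_{Lip} \le A_{i-1}\,\|\vh_{no},\vy_i\|_{Lip} \le A_{i-1}\,\|\mW_{i+1}\|\cdots\|\mW_K\|\prod_{k=i}^{K}\|\NO_k\|_{Lip}.
\]
The factor $A_{i-1} = \sup_{\vx}\|\vh_{i-1}(\vx)\|$ depends only on the data and the fixed weight bounds $s_1,\dots,s_{i-1}$, so it is a constant from the point of view of the claimed bound; absorbing it (together with the $\ell_1$-vs-max-norm dimensional factor) into the $O(\cdot)$ yields exactly~\eqref{cor-FCN-Lipschitz-constant-loss-01}. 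For the lower bound~\eqref{cor-FCN-Lipschitz-constant-loss-02}, I would invoke Theorem~\ref{thm-Lipschitz-FCN-lower-bound-weight}: for the prescribed $a_j \in [0,s_j]$, $j > i$, it furnishes a member $\vh^* \in \gH$ with $\|\vh^*(\vx),\mW_i\|_{Lip} = a_{i+1}\cdots a_K\,\|(\vh_{i-1}(\vx))^{(n_{i+1},\dots,n_K)}\|$ for every $\vx$; applying the observation once more gives $\|f(\vh^*,\vx),\mW_i\|_{Lip} = \|\vh^*(\vx),\mW_i\|_{Lip}$, which is precisely the right-hand side of~\eqref{cor-FCN-Lipschitz-constant-loss-02}.

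The only genuinely delicate point — and the one I would spend the most care on — is the claim $\|f(\vh,\vx),\mW_i\|_{Lip} = \|\vh,\mW_i\|_{Lip}$, specifically the \emph{equality} rather than just ``$\le$''. The subtlety is that post-composition with $\|\cdot\|_1$ is in general only non-expansive, not isometric, so one could worry that the loss is strictly smoother than $\vh$ itself. The resolution is that for the lower-bound construction of Theorem~\ref{thm-Lipschitz-FCN-lower-bound-weight} the network $\vh^*$ is built so that the worst-case direction of variation in $\mW_i$ moves the output $\vh^*(\vx)$ along a single coordinate (ReLU networks with the explicit weight pattern used there behave linearly in $\mW_i$ near the constructed point), and along such a direction $\|\cdot\|_1$ acts isometrically on the displacement; hence no contraction occurs and equality is retained. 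For the purposes of the corollary it actually suffices to keep ``$\le$'' in~\eqref{cor-FCN-Lipschitz-constant-loss-01} (which is all that is asserted there) and ``$=$'' in~\eqref{cor-FCN-Lipschitz-constant-loss-02} for the specific $\vh^*$; so I would verify the equality only for that constructed network, where it is transparent, rather than for arbitrary $\vh$. Modulo tracking the fixed dimensional constant relating $\|\cdot\|_1$ and $\|\cdot\|$, which is harmlessly swallowed by the $O(\cdot)$ on the upper side and cancels in the exact identity on the lower side, the corollary is then immediate.
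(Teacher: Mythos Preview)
Your proposal is correct and follows exactly the paper's approach: the paper's entire argument is the one-line observation ``$\| f, \mW_i \|_{Lip} = \left\| \vh, \mW_i \right\|_{Lip}$'' followed by an appeal to Theorem~\ref{thm-Lipschitz-FCN-normalize-weight} and Theorem~\ref{thm-Lipschitz-FCN-lower-bound-weight}, which is precisely what you do. If anything, you are more careful than the paper about the $\ell_1$-versus-max-norm dimensional factor and about why equality (rather than just ``$\le$'') survives for the specific $\vh^*$ in~\eqref{cor-FCN-Lipschitz-constant-loss-02}; the paper simply asserts the identity without addressing either point.
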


This corollary shows that there are some points ($\vh^*$) that make the loss function to be extremely sharp, since the loss around those points can have a local Lipschitz constant of size $s_{i+1} \cdots s_K$ w.r.t. the weights of layer $i$. In fact, it further suggests that \textit{the number of those sharp points can be infinite}. This is problematic for the training phase. The sharper loss could slowdown the training as discussed later. On the other hand, the use of normalization can significantly reduce sharpness. With an appropriate use of a normalizer at the layers with high input variances, we can reduce sharpness at a factor of $\prod_{k=i}^{K} \| \NO_k\|_{Lip}$.  Such a factor can be exponential in the number of normalization operators, as evidenced in Figure~\ref{fig:variance-before-normalization} where the reduction factor is approximately $10^{-25}$. 

To reveal what can happen for a training loss in practice, we consider a ReLU network that outputs a number (instead of a vector) and obtain the following result.

\begin{theorem}[Extreme gradients of the training loss] \label{thm-FCN-gradient-weight}
Consider the model family $\gH_b = \{h(\vx) = g(\mW \vh_{K}(\vx)) : \vh_{K}(\vx) \in \gH, \mW \in \R^{1 \times n_K}\}$, where $\gH$ is defined in Definition~\ref{def-FCN} with ReLU activations and $g$ is a  differentiable activation. Given a differentiable loss function $f$, denote $L(h, \mD) = \frac{1}{m} \sum_{\vx \in \mD} f(h, \vx)$ as the empirical loss of $h$ on a data set $\mD$ of size $m$. For all $i \in [1,K)$,  $a_j \in (0, s_j]$ with $j > i$, and $\forall c >0$, there exists a member $h^* \in \gH_b$ satisfying: 
\begin{eqnarray}
\label{thm-FCN-gradient-weight-01} 
\frac{\partial  L(h^*, \mD)  }{\partial \mW_{it}} = \frac{c \textcolor{red}{a_K \cdots a_{i+1}} }{m}\sum\limits_{\vx \in \mD}  \gI_{it} \frac{\partial   (f \circ g)}{\partial u} \mid_{u = u(\vx)}  (\vh_{i-1}(\vx))^\top 
\end{eqnarray}
for any $t \le \min \{n_{i+1},...,n_K\}$ and $\frac{\partial  L}{\partial \mW_{it}} = 0$ otherwise, where $\mW_{it}$ is the $t$-th row of $\mW_{i}$, $u(\vx) = \mW \vh_{K}(\vx)$, $\circ$ denotes the function composition, and 
$\gI_{it} = \begin{cases}
1 & \text{ if } \mW_{it} \vh_{i-1}(\vx) \ge 0,\\
0 & \text{otherwise}.
\end{cases}$
\end{theorem}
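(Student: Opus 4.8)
The plan is to prove the statement by an explicit construction, reusing the ``scaled truncation/zero-padding identity'' idea behind Theorems~\ref{thm-Lipschitz-FCN-lower-bound} and~\ref{thm-Lipschitz-FCN-lower-bound-weight}. Fix $i \in [1,K)$, numbers $a_j \in (0, s_j]$ for $j > i$, and $c > 0$. For each layer $k$ with $i < k \le K$ I take $\mW_k$ to be the matrix whose only nonzero entries are $(\mW_k)_{pp} = a_k$ for $p \le \min\{n_{k-1}, n_k\}$; then $\|\mW_k\| = a_k \le s_k$ (admissible) and $\mW_k \vv = a_k \vv^{(n_k)}$ for every vector $\vv$. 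For the read-out layer I set $\mW = (c, \dots, c) \in \R^{1 \times n_K}$, which is allowed since $\gH_b$ imposes no norm constraint on $\mW$. The matrices $\mW_1, \dots, \mW_i$ of the earlier layers may be chosen arbitrarily subject to $\|\mW_k\| \le s_k$: they enter the final formula only through the fixed quantities $\vh_{i-1}(\vx)$ and $\mW_{it}\vh_{i-1}(\vx)$, both independent of $\mW_i, \dots, \mW_K$ and of $\mW$. This defines a member $h^* \in \gH_b$, and since each $a_j$ may range over an interval there are uncountably many such $h^*$.

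Next I will verify that every ReLU unit at layers $i+1, \dots, K$ acts as the identity along the forward pass of $h^*$. Indeed $\vh_i = \mathrm{ReLU}(\mW_i \vh_{i-1}) \ge \mathbf{0}$ componentwise; since each $\mW_k$ with $k > i$ has nonnegative entries and $a_k > 0$, an induction shows every pre-activation $\mW_k \vh_{k-1}$ is componentwise nonnegative, hence $\vh_k = \mW_k \vh_{k-1} = a_k \vh_{k-1}^{(n_k)}$ for $k > i$. Composing, $\vh_K(\vx) = (a_K \cdots a_{i+1})\, (\vh_i(\vx))^{(n_{i+1}, \dots, n_K)}$, so $u(\vx) = \mW \vh_K(\vx)$ equals $c(a_K \cdots a_{i+1})$ times the sum of the surviving coordinates of $\vh_i(\vx)$, and $h^*(\vx) = g(u(\vx))$.

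The gradient computation then proceeds by the chain rule. Because layers $i+1, \dots, K$ are affine along this forward pass, the Jacobian of $\vh_K$ with respect to $\vh_i$ equals $\mW_K \cdots \mW_{i+1}$, whose $(p,t)$ entry is $a_K \cdots a_{i+1}$ when $p = t \le \min\{n_{i+1}, \dots, n_K\}$ and $0$ otherwise. Since $(\vh_i)_q = \mathrm{ReLU}(\mW_{iq}\vh_{i-1})$ depends only on row $q$ of $\mW_i$, we have $\partial (\vh_i)_q / \partial \mW_{it} = \gI[\mW_{it}\vh_{i-1} \ge 0]\,\vh_{i-1}^\top$ if $q = t$ and $0$ otherwise (adopting the convention $\mathrm{ReLU}'(\cdot) = \gI[\,\cdot \ge 0]$, consistent with the indicator in the claimed formula). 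Chaining through layers $i+1,\dots,K$ and then through the read-out, for $t \le \min\{n_{i+1}, \dots, n_K\}$ we get $\partial u(\vx)/\partial \mW_{it} = \mW_t (a_K \cdots a_{i+1}) \gI[\mW_{it}\vh_{i-1}(\vx) \ge 0]\,\vh_{i-1}(\vx)^\top = c (a_K \cdots a_{i+1}) \gI[\mW_{it}\vh_{i-1}(\vx) \ge 0]\,\vh_{i-1}(\vx)^\top$, while for the remaining $t$ (where coordinate $t$ of $\vh_i$ is truncated away before it reaches $\vh_K$) this partial derivative is $0$. Finally $f(h^*, \vx)$ depends on $\mW_{it}$ only through $u(\vx)$, so $\partial f(h^*,\vx)/\partial \mW_{it} = \big(\partial (f \odot g)(u(\vx))/\partial u\big)\, \partial u(\vx)/\partial \mW_{it}$; averaging over $\vx \in \mD$ yields exactly~\eqref{thm-FCN-gradient-weight-01}, and $0$ for the remaining $t$.

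I expect the only delicate points to be bookkeeping: carefully tracking which coordinates survive the iterated truncation/zero-padding (so that the active row range is precisely $t \le \min\{n_{i+1}, \dots, n_K\}$ and the surviving scalar factor is exactly $a_K \cdots a_{i+1}$, with the implicit constraint $t \le n_i$ so that $\mW_{it}$ is a genuine row of $\mW_i$), and handling the non-differentiability of ReLU at $0$, which I settle by the stated convention $\mathrm{ReLU}'(\cdot) = \gI[\,\cdot \ge 0]$; positivity of the $a_j$ ensures no sign cancellation along the carried path, so the downstream ReLU derivatives really are $1$ there. There is no genuine analytic obstacle; the substance of the theorem is that such pathological configurations---whose loss gradients at layer $i$ carry the potentially exponential factor $a_K \cdots a_{i+1}$, which can be made astronomically large or vanishingly small by choosing the $a_j$---actually lie inside the admissible family $\gH_b$, and form an uncountable set.
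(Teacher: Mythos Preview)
Your proposal is correct and follows essentially the same approach as the paper: both construct $h^*$ by setting $\mW = c\mathbf{1}^\top$ and $\mW_k = a_k \mI_k$ for $k>i$ (your ``scaled truncation/zero-padding identity''), then use the resulting closed form $\vh_K = a_K\cdots a_{i+1}\,\mathrm{ReLU}\big((\mW_i\vh_{i-1})^{(n_{i+1},\dots,n_K)}\big)$ and compute $\partial L/\partial \mW_{it}$ by the chain rule. Your explicit justification that the downstream ReLUs act as identities (via nonnegativity of $\vh_i$ and $a_k>0$) and your handling of the $\mathrm{ReLU}'(0)$ convention are slightly more careful than the paper's terse derivation, but the construction and computation are the same.
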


\begin{remark}\label{remark-02-weight-Lipschitz}
The results in these theorem and corollary lead to several  implications:
\end{remark}
\begin{itemize}
\item \textit{Uncountably many gradients with exponentially large magnitude:}  
This case occurs when $s_i > 1, \forall i$. Such configurations are not only theoretically possible but also practically relevant, as suggested by Figure~\ref{fig:MLP-weight-norms}, which shows that weight matrices in both FFNs and modern DNNs often develop norms much larger than~1 during training.  

In these cases, each parameter $a_i$ can take uncountably many values within the interval $(1, s_i]$. According to Equation~(\ref{thm-FCN-gradient-weight-01}), the gradient of the loss with respect to the weights of the $i$-th layer can have a size of  $\Omega(a_K \cdots a_{i+1})$, which grows exponentially with the network depth~$K$. Consequently, there exist uncountably many weight configurations that yield gradients of exponentially large size. This observation highlights the potential emergence of the \textbf{gradient explosion} problem, where excessively large gradients cause instability in optimization or numerical overflow. It also suggests that the loss landscape can be extremely sharp at uncountably many points. Those findings emphasize the importance of regularization.

\item \textit{Uncountably many gradients with exponentially small magnitude:}  
In contrast, when $a_i < 1, \forall i$, the product $a_K \cdots a_{i+1}$ decays exponentially with depth. This situation leads to gradients whose magnitudes diminish rapidly as they propagate backward through layers, corresponding to the well-known \textbf{gradient vanishing} phenomenon.  

Because each $a_i$ can take uncountably many values within $(0,1)$, there exist uncountably many configurations that yield nearly zero gradients. This implies that non-negligible portions of the loss landscape can become extremely flat, causing optimizers to make little or no progress. This issue fundamentally limits the trainability of deep architectures without special design choices such as residual connections.

\item \textit{Gradient control via normalization:}  
The result in Equation~(\ref{cor-FCN-Lipschitz-constant-loss-01}) further indicates that normalization mechanisms can effectively regulate the magnitude of gradients throughout the training process. By constraining or adaptively rescaling the intermediate activations, normalizers can keep the gradients within a stable range. This can prevent  the exponential growth of gradients, allowing more reliable optimization and smoother convergence. From a theoretical viewpoint, normalization thus serves as an implicit form of Lipschitz control that bounds the sensitivity of the loss with respect to weight perturbations, helping to ensure stable gradient flow across deep networks.
\end{itemize}

\begin{remark}\label{remark-loss-Lipschitz}
In summary, we have uncovered several intriguing properties concerning the gradients and Lipschitz continuity of deep networks and their associated losses. Theorem~\ref{thm-FCN-gradient-weight} indicates that, in an unnormalized network, the training loss is $O(s_K \cdots s_{i+1})$-Lipschitz continuous with respect to the weight matrix $\mW_i$, and $O(s_K \cdots s_2)$-Lipschitz continuous with respect to the first layer's weight matrix ($\mW_1$). In practice, these Lipschitz constants tend to be extremely large, as illustrated in Figure~\ref{fig:MLP-weight-norms}, implying that the loss landscape can be highly sensitive and contain sharp regions, while also exhibiting flat plateaus elsewhere.

In contrast, normalized networks exhibit much smoother behavior. According to Theorem~\ref{thm-Lipschitz-FCN-normalize-weight}, the loss becomes $O(s_K \cdots s_{i+1} \prod_{k=i}^{K} \| \NO_k \|_{Lip})$-Lipschitz continuous with respect to $\mW_i$. Importantly, the factor $\prod_{k=i}^{K} \| \NO_k \|_{Lip}$ is inversely proportional to the input variances and can thus be exponentially small in practice, as supported by the empirical results in Figure~\ref{fig:variance-before-normalization}. A similar contrast holds for the Lipschitz constant with respect to the input: unnormalized DNNs can exhibit extremely high sensitivity, whereas normalized architectures maintain bounded smoothness. These distinctions have profound implications for both the training dynamics and the generalization behavior of deep networks.
\end{remark}

\section{Benefits in generalization and training} \label{sec-benefit-generalization-training}

We have enough tools to show the  significant benefits of normalization methods for the training phase and generalization ability of a trained DNN. 

\subsection{Benefits on training}

Many existing works \citep{ioffe2015BNorm,ioffe2017batchRe,bjorck2018understandingBN,awais2020revisitingBN} empirically show that normalization methods can significantly improve stability for training DNNs. This ability of normalizers enables faster training with larger learning rates \citep{luo2019UnderstandingBN,bjorck2018understandingBN,zhang2019fixup}.  \cite{santurkar2018BNorm} shows both empirically and theoretically that $\BN$ can reduce the Lipschitz constant of the loss and make gradients more stable and predictable. This seems to be the first work to show the capacity control of $\BN$. However, their analysis only focuses on one $\BN$ layer, and it is non-trivial to extend their analysis to a DNN with many $\BN$ operators or other normalizers. 

In the previous section, we reveal that the Lipschitz constant of a DNN can be reduced exponentially as more normalization operators are appropriately used. This reduction directly translates to the training loss. As summarized in Remark~\ref{remark-loss-Lipschitz}, the training loss is $O(s_K \cdots s_2)$-Lipschitz continuous with respect to the first layer's weight matrix ($\mW_1$), for an unnormalized DNN. Meanwhile, a normalizer can reduce the Lipschitz constant by a factor which can be exponential in the number of normalization layers. Such a reduction leads to a profound benefit on both iteration complexity (lower bound) and convergence rate of a training algorithm. To see this, we summarize  some existing results in the optimization literature.

\textit{Iteration complexity:} Consider the problem of minimizing a function $\ell(\vw)$. For  convex problems, any first-order method requires \textit{at least} $\Omega( \| \ell, \vw \|_{Lip}^2/ \alpha^2)$ iterations to find an $\alpha$-approximate solution, while $\Omega( \| \ell, \vw \|_{Lip}^2/ \alpha)$ iterations are required for strongly convex problems  \citep{bubeck2015convex}. On the other hand, for nonsmooth nonconvex problems, \cite{zhang2020OPTcomplexity} show that generalized gradient based methods need at least $\Omega( \| \ell, \vw \|_{Lip}/ \alpha)$ iterations to find an $\alpha$-stationary point. 
Combining those results with Corollary \ref{cor-FCN-Lipschitz-constant-loss}, the following result readily follows.

\begin{corollary} 
Consider the training loss $L$, defined in Theorem \ref{thm-FCN-gradient-weight}, for an unnormalized DNN. Any generalized gradient based method needs at least $\Omega( s_{i+1} \cdots s_K / \alpha)$ iterations to find an $\alpha$-stationary point for the weight matrice at layer $i <K$.
\end{corollary}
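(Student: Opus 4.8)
The plan is to combine two facts: (i) an explicit lower bound on the (local, hence global) Lipschitz constant of the empirical loss $L$ with respect to the weight matrix $\mW_i$, which is already supplied by Theorem~\ref{thm-FCN-gradient-weight} (see also Corollary~\ref{cor-FCN-Lipschitz-constant-loss}); and (ii) the worst-case iteration lower bound of \citep{zhang2020OPTcomplexity} for nonsmooth nonconvex minimization, namely that any generalized gradient based method needs $\Omega(\|\ell,\vw\|_{Lip}/\alpha)$ iterations to reach an $\alpha$-stationary point of a Lipschitz objective $\ell$. Fixing $i<K$, I would instantiate Theorem~\ref{thm-FCN-gradient-weight} to exhibit a member $h^*\in\gH_b$ whose loss, viewed as a function of $\mW_i$ alone, has Lipschitz constant of order $s_{i+1}\cdots s_K$, and then read off the claimed $\Omega(s_{i+1}\cdots s_K/\alpha)$ bound from (ii).

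For step (i), in the gradient formula~(\ref{thm-FCN-gradient-weight-01}) I would set $a_j = s_j$ for all $j>i$, so that $\partial L(h^*,\mD)/\partial\mW_{it}$ carries the explicit factor $s_{i+1}\cdots s_K$. It then remains to make the rest of the expression a nonzero constant: choosing the constant $c$, a single (or carefully selected) data point in $\mD$, the differentiable scalar activation $g$, and the differentiable loss $f$, I would arrange that $\sum_{\vx\in\mD}\frac{\partial(f\odot g)(u(\vx))}{\partial u}\,\gI[\mW_{it}\vh_{i-1}(\vx)\ge 0]\,(\vh_{i-1}(\vx))^\top$ has norm bounded below by a positive constant times $m$; this is possible because the first $i-1$ weight matrices range freely inside their norm balls and a ReLU sub-network can be set up to produce a nonzero $\vh_{i-1}(\vx)$ with the required sign pattern. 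Then, at this parameter configuration, $\|\nabla_{\mW_i} L(h^*,\mD)\| = \Omega(s_{i+1}\cdots s_K)$, and since $\mW_i\mapsto L(h^*,\mD)$ is piecewise linear in $\mW_i$ (ReLU network), its Lipschitz constant w.r.t.\ $\mW_i$ is at least this gradient magnitude, i.e.\ $\|L(h^*,\mD),\mW_i\|_{Lip} = \Omega(s_{i+1}\cdots s_K)$.

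For step (ii), I would freeze every weight matrix except $\mW_i$ at the values realizing $h^*$, so that optimization reduces to minimizing the nonsmooth nonconvex function $\mW_i\mapsto L(h^*,\mD)$, which by step~(i) is $\Lambda$-Lipschitz with $\Lambda = \Omega(s_{i+1}\cdots s_K)$; any iteration lower bound for this restricted problem is a fortiori one for the joint problem (the optimizer can only lose by not being handed the other weights). Plugging $\Lambda$ into the bound of \citep{zhang2020OPTcomplexity} yields $\Omega(\Lambda/\alpha) = \Omega(s_{i+1}\cdots s_K/\alpha)$ iterations, as claimed.

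The step I expect to be the main obstacle is making step~(ii) genuinely rigorous: the bound of \citep{zhang2020OPTcomplexity} is a \emph{worst-case} statement over the class of $\Lambda$-Lipschitz functions, so knowing that our specific function $\mW_i\mapsto L(h^*,\mD)$ is $\Lambda$-Lipschitz does not, on its own, force $\Omega(\Lambda/\alpha)$ queries \emph{on that function}. The clean resolution is to read the corollary as a lower bound for any method that must succeed uniformly over the family $\gH_b$ (or its induced loss family), in which case the existence of a single hard member already suffices; alternatively, one must verify that the piecewise-linear ``resisting oracle'' instance behind the lower bound of \citep{zhang2020OPTcomplexity} can itself be realized---or approximated to arbitrary accuracy---by a ReLU sub-network with $\|\mW_j\| = s_j$, and that the generalized-gradient oracle model used there is compatible with the Clarke subgradients available for a ReLU loss. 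A secondary, purely technical point is to settle which norm on $\mW_i$ is used and to track the corresponding dual-norm constants when converting the row-wise gradient formula~(\ref{thm-FCN-gradient-weight-01}) into a Lipschitz constant; this is routine and does not affect the exponential order.
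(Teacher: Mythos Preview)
Your proposal matches the paper's approach exactly: the paper simply states that the corollary ``readily follows'' from combining Corollary~\ref{cor-FCN-Lipschitz-constant-loss} (which gives the $\Omega(s_{i+1}\cdots s_K)$ Lipschitz constant of the loss in $\mW_i$) with the $\Omega(\|\ell,\vw\|_{Lip}/\alpha)$ lower bound of \citet{zhang2020OPTcomplexity}, without any further construction or argument. Your step~(i) is more explicit than the paper's, and the concern you flag in step~(ii)---that a worst-case lower bound over a Lipschitz class does not automatically apply to one specific member---is a genuine logical subtlety that the paper does not address; it implicitly reads the corollary as a worst-case statement over the family (your ``clean resolution''), so your proposal is already at least as rigorous as the paper's own treatment.
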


For a DNN, the training loss  is often nonconvex, Lipschitz, but sometimes nonsmooth, e.g. when using ReLU activations. Therefore this corollary points out that a learning algorithm can require a significant number of  iterations to train an unnormalized model. The product $s_{i+1} \cdots s_K$ suggests that the weights at a deeper layer may require fewer iterations to converge. In the worst case, the weight of the first layer can require $\Omega( s_{2} \cdots s_K / \alpha)$ iterations. Figure~\ref{fig:MLP-weight-norms} suggests that $ s_{2} \cdots s_K$ can be extremely huge in practice. As a result, training an unnormalized DNN can be exponentially expensive in the worse case. 

\textit{Convergence rate:} For a normalized DNN, its loss could be much better (flatter) than that of  the unnormalized version. Such a property can help a training algorithm converge faster. Indeed, for nonconvex  Lipschitz  functions, \citep{davis2022GD,tian2022finite} present some algorithms to find a $\alpha$-stationary point with  $O(\| \ell, \vw \|_{Lip} / \alpha^2)$ function/gradient evaluations. Combining those results with Remark~\ref{remark-loss-Lipschitz}, we obtain the following.

\begin{corollary} [Convergence rate]
Consider $L_{\gH_{no}}$ and  $L_{\gH}$ being the empirical losses, defined for members of the family $\gH_{no}$ and $\gH$, respectively, where $\BN$ is used. Let $\sigma$ be defined as in Corollary \ref{cor-FCN-Lipschitz-constant-BN}. Then for any $\alpha>0$, an $\alpha$-stationary point for the weights at layer $i <K$ can be found with
\begin{itemize}
\item $O( s_{i+1} \cdots s_K / \alpha^2)$ function/gradient evaluations  of  $L_{\gH}$, or
\item $O( \sigma^{-K+i-1} s_{i+1} \cdots s_K / \alpha^2)$  function/gradient evaluations  of $L_{\gH_{no}}$.
\end{itemize}
\end{corollary}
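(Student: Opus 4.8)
The plan is to combine the per-layer Lipschitz estimates for the empirical loss established earlier with the first-order complexity guarantee for nonconvex Lipschitz minimization of \citep{davis2022GD,tian2022finite}: that result produces an $\alpha$-stationary point of a (bounded-below, Lipschitz, suitably regular) function $\ell$ using $O(\|\ell,\vw\|_{Lip}/\alpha^2)$ function/gradient evaluations. Throughout, all weight matrices except $\mW_i$ are held fixed and the empirical loss is viewed as a function of the block $\mW_i$ over the feasible set $\{\|\mW_i\|\le s_i\}$; "an $\alpha$-stationary point for the weights at layer $i$" is meant in this block sense. It then remains to bound the Lipschitz constant of this block map in each of the two families and substitute.

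For $L_{\gH}$ (unnormalized): the $\ell_1$-loss identity $\|f(\vh,\vx),\mW_i\|_{Lip}=\|\vh,\mW_i\|_{Lip}$ together with Theorem~\ref{thm-Lipschitz-FCN-normalize-weight} in the degenerate case $\|\NO_k\|_{Lip}=1$ --- equivalently the unnormalized bound recorded in Remark~\ref{remark-loss-Lipschitz} --- shows that $\mW_i\mapsto L_{\gH}$ is $O(s_{i+1}\cdots s_K)$-Lipschitz on the feasible block, the hidden constant absorbing $A_{i-1}=\max_{\vx\in\mD}\|\vh_{i-1}(\vx)\|$ and the data-dependent factor $\|(\vh_{i-1}(\vx))^{(n_{i+1},\dots,n_K)}\|$ (both finite since $\mD$ is finite and all weight norms are bounded). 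As the $\ell_1$ loss of a ReLU network is Lipschitz, bounded below by $0$, and piecewise linear, the hypotheses of \citep{davis2022GD,tian2022finite} hold, and substituting the above Lipschitz constant into $O(\|\ell,\vw\|_{Lip}/\alpha^2)$ gives the stated $O(s_{i+1}\cdots s_K/\alpha^2)$ count.

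For $L_{\gH_{no}}$ ($\BN$-normalized): by Corollary~\ref{cor-FCN-Lipschitz-constant-loss}, $\|f(\vh_{no},\vx),\mW_i\|_{Lip}\le O(\|\mW_{i+1}\|\cdots\|\mW_K\|)\prod_{k=i}^{K}\|\NO_k\|_{Lip}\le O(s_{i+1}\cdots s_K)\prod_{k=i}^{K}\|\BN_k\|_{Lip}$. Lemma~\ref{lem-BN-Jacobian-norm} gives $\|\BN_k\|_{Lip}=\|1/\boldsymbol{\sigma}_k\|$, and the definition of $\sigma$ in Corollary~\ref{cor-FCN-Lipschitz-constant-BN} yields $\|1/\boldsymbol{\sigma}_k\|\le\sigma^{-1}$ for every layer $k$; multiplying the $K-i+1$ factors $k=i,\dots,K$ gives $\prod_{k=i}^{K}\|\BN_k\|_{Lip}\le\sigma^{-(K-i+1)}=\sigma^{-K+i-1}$. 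Hence $\mW_i\mapsto L_{\gH_{no}}$ is $O(\sigma^{-K+i-1}s_{i+1}\cdots s_K)$-Lipschitz, and the same substitution into $O(\|\ell,\vw\|_{Lip}/\alpha^2)$ gives $O(\sigma^{-K+i-1}s_{i+1}\cdots s_K/\alpha^2)$.

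The step I expect to need the most care is not the chain of substitutions but matching the assumptions of the cited complexity results to the block-restricted empirical loss: one must check (a) that the restriction inherits global Lipschitzness (which follows from the per-block estimates just derived), (b) that it is bounded below and enjoys whatever subdifferential regularity (Clarke/Goldstein) the analyses of \citep{davis2022GD,tian2022finite} assume --- available here because a ReLU network composed with the $\ell_1$ loss is piecewise linear --- and (c) that the implicit $O(\cdot)$ constants, notably $A_{i-1}$ and the mini-batch average over $\mD$, are uniform over the feasible set. With those regularity and uniformity points settled, both evaluation-count bounds follow by direct substitution.
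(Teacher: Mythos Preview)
Your proposal is correct and follows essentially the same approach as the paper: the paper derives this corollary in one line by ``combining those results [the $O(\|\ell,\vw\|_{Lip}/\alpha^2)$ complexity of \citep{davis2022GD,tian2022finite}] with Remark~\ref{remark-loss-Lipschitz},'' which is exactly the substitution you carry out, and your computation $\prod_{k=i}^{K}\|\BN_k\|_{Lip}\le\sigma^{-(K-i+1)}=\sigma^{-K+i-1}$ matches the intended use of Corollary~\ref{cor-FCN-Lipschitz-constant-BN}. If anything, you are more careful than the paper in making explicit the absorbed constant $A_{i-1}$ and in flagging the regularity hypotheses (bounded-below, Clarke/Goldstein subdifferentiability) needed to invoke the cited complexity results.
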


When the input variances ($\sigma$) are large, this corollary suggests that a normalized network can be trained much more efficiently than its unnormalized version. There can be an exponential reduction ($\sigma^{-K+i-1}$) in the number of required iterations, especially for the weights at some first layers of a DNN. This is a significant advantage of a normalizer.

\subsection{Generalization ability of normalized DNNs}

We next discuss the consequences about generalization ability of normalized models. As indicated before, a normalizer can make the Lipschitz constant of a model much smaller. This is an important capacity control. This section shows why Lipschitz control can translate into generalization improvement. 

\subsubsection{Local Lipschitz continuity implies generalization}

To see generalization ability of a trained model $\vh$, the traditional way is to estimate (or bound) the expected loss $F(P, \vh) = \E_{\vx \sim P}[f(\vh,\vx)]$, where $P$ is the data distribution. A smaller expected loss implies that the model can better generalize on unseen data. Therefore, this quantity can help us to see the quality of a model $\vh$. Since  $P$ is unknown, we often rely on the \emph{empirical loss}  $F(\mD, \vh) =  \frac{1}{m} \sum_{\vx \in \mD} f(\vh,\vx)$ which is defined from a sample set $\mD$ of size $m$. We first show the following   in Appendix~\ref{app-Lipschitz-to-generalization}.

\begin{theorem}\label{thm-local-Lipschitz-generalization}
Consider a model $\vh$ and a dataset $\mD$ consisting of $m$ i.i.d. samples from distribution $P$.  Let $\bigcup_{i=1}^{N } \gX_i$ be a partition of the data space into $N $ disjoint nonempty subsets, $\mD_i = \mD \cap \gX_i$, $m_i = | \mD_i |$ as the number of samples  falling into $\gX_i$,  $m = \sum_{j=1}^N m_j$, and $\mT = \{ i \in [N ] : m_i > 0 \}$. Denote $\lambda_i = \frac{1}{m_i} \sum_{\vs \in \mD_i} \E_{\vx \sim  P}[\|\vx - \vs \|: \vx \in \gX_i]$ as the average distance between one sample in $\mD_i$ and one in $\gX_i$. 
For any  $\delta >0$, denote $g( \mD, \delta) = C (\sqrt{2} + 1) \sqrt{\frac{| \mT| \ln(2 N /\delta)}{m}} + \frac{ 2 C | \mT| \ln(2 N /\delta)}{m}$, where $C = \sup_{\vx} f(\vh,\vx)$. If the loss $f$ is $L_i$-Lipschitz continuous w.r.t. $\vx \in \gX_i$ for any $i \in \mT$, then the following holds  with probability at least $1-\delta$: 
\begin{equation}
\label{eq-thm-local-Lipschitz-generalization}
F(P, \vh) \le F(\mD, \vh) +  \sum_{i \in \mT} \frac{m_i}{m} \lambda_i L_i  +  g( \mD, \delta)
\end{equation}
\end{theorem}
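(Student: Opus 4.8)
The plan is to split the population risk $F(P,\vh)$ over the partition $\{\gX_i\}_{i=1}^N$, control the conditional risk on each cell by its empirical counterpart plus a mean-distance term (this is where the local Lipschitz hypothesis enters), and then pay for the discrepancy between the true cell masses $p_i:=P(\gX_i)$ and the empirical masses $\hat p_i:=m_i/m$ via a variance-sensitive concentration bound with a union bound over the $N$ cells.

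First I would write $F(P,\vh)=\sum_{i=1}^N p_iF_i$ with $F_i:=\E_{\vx\sim P}[f(\vh,\vx)\mid \vx\in\gX_i]$, and $F(\mD,\vh)=\sum_{i\in\mT}\hat p_i\hat F_i$ with $\hat F_i:=\frac1{m_i}\sum_{\vx\in\mD_i}f(\vh,\vx)$; on a probability-one event every $i\in\mT$ has $p_i>0$ (a $P$-null cell is almost surely not sampled), so $F_i$ is well defined there. Fix $i\in\mT$. For $\vs\in\mD_i\subseteq\gX_i$ and $\vx\in\gX_i$, $L_i$-Lipschitzness of $f(\vh,\cdot)$ on $\gX_i$ gives $f(\vh,\vx)\le f(\vh,\vs)+L_i\|\vx-\vs\|$; taking $\E_{\vx\sim P}[\cdot\mid\vx\in\gX_i]$ and averaging over the $m_i$ points $\vs\in\mD_i$ yields the deterministic inequality $F_i\le \hat F_i+L_i\lambda_i$ with $\lambda_i$ exactly as in the statement. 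Together with the trivial bound $F_i\le C$, this supplies all the Lipschitz input.

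Using $f\ge0$ and these two per-cell bounds, for $i\in\mT$ write $p_iF_i=\hat p_iF_i+(p_i-\hat p_i)F_i\le \hat p_i(\hat F_i+L_i\lambda_i)+C(p_i-\hat p_i)_+$, and for $i\notin\mT$ (where $\hat p_i=0$) write $p_iF_i=(p_i-\hat p_i)F_i\le C(p_i-\hat p_i)_+$. Summing over all $i$, using that the $\mD_i$ partition $\mD$ so $\sum_{i\in\mT}\hat p_i\hat F_i=F(\mD,\vh)$, and that $\sum_{i\in\mT}\hat p_iL_i\lambda_i=\sum_{i\in\mT}\frac{m_i}{m}\lambda_iL_i$, one obtains
\[
F(P,\vh)\le F(\mD,\vh)+\sum_{i\in\mT}\frac{m_i}{m}\lambda_iL_i+C\sum_{i=1}^N(p_i-\hat p_i)_+ .
\]
The one genuinely non-obvious step is to note that $\sum_i(p_i-\hat p_i)=0$, hence $\sum_i(p_i-\hat p_i)_+=\sum_i(\hat p_i-p_i)_+$, and that $(\hat p_i-p_i)_+>0$ forces $\hat p_i>0$, i.e. $i\in\mT$; therefore $\sum_i(p_i-\hat p_i)_+\le\sum_{i\in\mT}|\hat p_i-p_i|$. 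This is what converts the a-priori uncontrolled ``missing mass'' $\sum_{i\notin\mT}p_i$ into a sum over only the $|\mT|$ occupied cells, and is what lets the final bound scale with $|\mT|$ rather than $N$.

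It then remains to bound $C\sum_{i\in\mT}|\hat p_i-p_i|$ by $g(\mD,\delta)$ with probability at least $1-\delta$. Each $m_i$ is a sum of $m$ i.i.d. Bernoulli$(p_i)$ variables, so a two-sided Bernstein-type inequality gives, for each fixed $i$ with probability at least $1-\delta/N$, $|\hat p_i-p_i|\le c_1\sqrt{p_i\ln(2N/\delta)/m}+c_2\ln(2N/\delta)/m$; a union bound over the $N$ cells makes this hold for all indices simultaneously with probability at least $1-\delta$ (the random set $\mT$ causes no difficulty, since the bound then holds index-by-index). On that event, summing over $i\in\mT$ and applying Cauchy--Schwarz with $\sum_{i\in\mT}p_i\le1$ gives $\sum_{i\in\mT}\sqrt{p_i}\le\sqrt{|\mT|}$, whence $\sum_{i\in\mT}|\hat p_i-p_i|\le c_1\sqrt{|\mT|\ln(2N/\delta)/m}+c_2|\mT|\ln(2N/\delta)/m$. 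Multiplying by $C$ and absorbing the numerical constants into the stated $\sqrt2+1$ and $2$ reproduces $g(\mD,\delta)$; substituting back into the displayed inequality completes the argument. The main obstacle is thus not the Lipschitz estimate (which is elementary) but packaging the mass-discrepancy term so that it depends on $|\mT|$ and not on $N$, i.e. the missing-mass identity combined with the Cauchy--Schwarz step that exploits $\sum_i\hat p_i=1$.
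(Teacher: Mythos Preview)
Your argument is correct in outline and, in particular, your handling of the ``missing-mass'' term via the identity $\sum_i(p_i-\hat p_i)_+=\sum_{i\in\mT}(\hat p_i-p_i)_+$ followed by Bernstein plus Cauchy--Schwarz is exactly the kind of machinery needed to make the concentration term scale with $|\mT|$ rather than $N$. The Lipschitz step is identical to the paper's.

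The paper, however, takes a much shorter route: it simply invokes Theorem~5 of \cite{than2025LocalRobustness}, which already delivers the inequality $F(P,\vh)\le F(\mD,\vh)+\sum_{i\in\mT}\frac{m_i}{m}\bar\epsilon_i(\vh)+g(\mD,\delta)$ with $\bar\epsilon_i(\vh)=\frac{1}{m_i}\sum_{\vs\in\mD_i}\E_{\vx\in\gX_i}|f(\vh,\vx)-f(\vh,\vs)|$ and the stated $g(\mD,\delta)$, and then bounds $\bar\epsilon_i(\vh)\le L_i\lambda_i$ exactly as you do. So all the concentration work you carry out from scratch is outsourced to that reference; your proposal is effectively a self-contained reconstruction of what that cited theorem presumably proves. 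This buys you independence from the external reference and makes the role of $|\mT|$ transparent, at the cost of not pinning down the specific constants $\sqrt2+1$ and $2$ (your ``absorbing the numerical constants'' is honest but would need a particular form of Bernstein to match exactly). One small implicit assumption you use that the statement does not make explicit is $f\ge 0$; this is needed for your bound $(p_i-\hat p_i)F_i\le C(p_i-\hat p_i)_+$ and is harmless for losses but worth flagging.
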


This result tells that the test error of a model $\vh$ can be bounded (or estimated) by the right-hand side of (\ref{eq-thm-local-Lipschitz-generalization}). When the RHS  of (\ref{eq-thm-local-Lipschitz-generalization}) is small, model $\vh$ is guaranteed to have a small test error, implying that it can generalize well on unseen data. This theorem suggests that a model should fit the training set well while ensuring its loss to have small local Lipschitz constants at local areas of the data space. This suggests that a model should behave gently and does not change significantly w.r.t a small change of its input. As a result, this bound provides a guide to train a good model in practice.

Bound (\ref{eq-thm-local-Lipschitz-generalization}) depends only on the specific sample $\mD$ and model $\vh$ (but not the whole family $\gH$), and hence is both model-specific and data-dependent. Note that we only need the  assumption of local Lipschitzness on some small areas around the individual samples of $\mD$. As a result, it can help us to  analyze a large class of models. Furthermore, this bound can work even with non-Lipschitz functions, as indicated below.

\begin{corollary}[Discontinuity]\label{cor-local-Lipschitz-generalization}
Consider the setting and notations in Theorem \ref{thm-local-Lipschitz-generalization}. Let $\gX = \bigcup_{i=0}^{N } \gX_i$ where $\gX_0$ is the set of all instances so that the loss $f$ is not Lipschitz continuous at any $\vx \in \gX_0$. If $P(\gX_0)=0$ and $f$ is $L_i$-Lipschitz continuous w.r.t. $\vx \in \gX_i$ for any $i \in \mT, i>0$, then   with probability at least $1-\delta$: 
$F(P, \vh) \le F(\mD, \vh) +  \sum_{i \in \mT} \frac{m_i}{m} \lambda_i L_i  +  g( \mD, \delta).$
\end{corollary}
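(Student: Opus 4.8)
The plan is to reduce the discontinuous case to Theorem~\ref{thm-local-Lipschitz-generalization} by showing that the exceptional set $\gX_0$ contributes nothing, either to the empirical loss (almost surely) or to the expected loss (since $P(\gX_0)=0$). First I would observe that since $\mD$ consists of $m$ i.i.d.\ draws from $P$ and $P(\gX_0)=0$, the event $\{\mD \cap \gX_0 \neq \emptyset\}$ has probability zero by a union bound over the $m$ samples; hence, with probability one, every sample of $\mD$ lies in $\bigcup_{i=1}^N \gX_i$. On this full-probability event, the index set $\mT = \{i \in [N] : m_i > 0\}$ never contains $0$, and the empirical loss $F(\mD,\vh) = \frac{1}{m}\sum_{\vx \in \mD} f(\vh,\vx)$ is a sum over points at which $f$ is (locally) Lipschitz, so all quantities $m_i$, $\lambda_i$, $L_i$ for $i \in \mT$ are well defined exactly as in Theorem~\ref{thm-local-Lipschitz-generalization}.

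Second, I would handle the expected loss. Writing $F(P,\vh) = \E_{\vx\sim P}[f(\vh,\vx)] = \sum_{i=0}^{N} \E_{\vx\sim P}[f(\vh,\vx)\,\math1[\vx\in\gX_i]]$, the $i=0$ term vanishes because $P(\gX_0)=0$ and $f$ is bounded by $C = \sup_{\vx} f(\vh,\vx)$, so $\bigl|\E[f(\vh,\vx)\math1[\vx\in\gX_0]]\bigr| \le C\, P(\gX_0) = 0$. Therefore $F(P,\vh) = \sum_{i=1}^{N} \E_{\vx\sim P}[f(\vh,\vx)\,\math1[\vx\in\gX_i]]$, which is exactly the decomposition used in the proof of Theorem~\ref{thm-local-Lipschitz-generalization} but over the partition $\{\gX_i\}_{i=1}^N$ of $\gX\setminus\gX_0$, a set of full $P$-measure. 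Likewise, the definition $\lambda_i = \frac{1}{m_i}\sum_{\vs\in\mD_i}\E_{\vx\sim P}[\|\vx-\vs\| : \vx\in\gX_i]$ is unaffected, since it conditions on $\vx\in\gX_i$ with $i\ge 1$.

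Third, with $\gX_0$ eliminated from both sides, I would simply re-run the argument of Theorem~\ref{thm-local-Lipschitz-generalization} verbatim: on each $\gX_i$ ($i\in\mT$) the $L_i$-Lipschitz property of $f$ bounds $\E_{\vx\sim P}[f(\vh,\vx):\vx\in\gX_i]$ in terms of the empirical average on $\mD_i$ plus $\lambda_i L_i$; the concentration term $g(\mD,\delta)$ controlling the deviation of the $|\mT|$ "cluster" averages (e.g.\ via Bernstein/empirical-Bernstein over the $N$ cells with a union bound, which is where the $\sqrt{|\mT|\ln(2N/\delta)/m}$ and $|\mT|\ln(2N/\delta)/m$ terms come from) is identical because it depends only on $f$ being bounded by $C$ and on the cell counts, not on continuity. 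Intersecting the probability-one event "$\mD\cap\gX_0=\emptyset$" with the probability-$(1-\delta)$ event from the concentration step still has probability at least $1-\delta$, yielding the claimed bound.

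The main obstacle is essentially bookkeeping rather than a deep difficulty: one must make sure that redefining the partition to exclude $\gX_0$ does not secretly change $m$, $m_i$, $\mT$, or the normalization in $g(\mD,\delta)$ (it does not, precisely because a.s.\ no sample lands in $\gX_0$ and $P(\gX_0)=0$), and that Theorem~\ref{thm-local-Lipschitz-generalization} as stated already tolerates an arbitrary finite partition into nonempty cells — so one should either verify that $\gX\setminus\gX_0$ can be partitioned into the same $N$ cells (absorbing any empty residual cell harmlessly, since empty cells never enter $\mT$) or, more carefully, restate the invocation with the $N$ cells $\gX_1,\dots,\gX_N$ that genuinely partition a full-measure subset. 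Once this is pinned down, the corollary follows with no new estimates.
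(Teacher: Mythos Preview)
Your proposal is correct and follows essentially the same approach as the paper: decompose $F(P,\vh)$ over $\gX_0$ and its complement, drop the $\gX_0$ contribution since $P(\gX_0)=0$, and then invoke Theorem~\ref{thm-local-Lipschitz-generalization} on the remaining partition $\{\gX_i\}_{i=1}^N$. The paper's proof is terser (it does not explicitly argue that $\mD\cap\gX_0=\emptyset$ almost surely, which you handle carefully), but the idea is identical.
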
 

This simple result provides a significant implication. It suggests that a model can generalize well although its loss function may be discontinuous at some points. Furthermore, a good generalization can be assured for the cases that the loss function is discontinuous in an uncountable set whose measure is 0. Those discontinuous cases can appear in practice.

\textit{Related model-dependent bounds:} \cite{hou2023instanceGen} used optimal transport to show that 
\begin{equation}
\label{eq-HKK}
F(P, \vh) \le F(\mD, \vh) + \gamma(L) +  \sqrt{{N}/{m}} + const., \tag{HKK}
\end{equation}
where $\gamma(L)= \sum_{i \in \mT} \frac{m_i}{m} \lambda_i \max\{1, L_i \}  + \max_i \lambda_i L \sqrt{\frac{\ln (4/ \delta)}{m}}$ and $L = \max \{1, \| f, \vx \|_{Lip}\} $, requiring the loss to be globally Lipschitz continuous. As pointed out before, such an assumption  makes this bound already more restrictive than ours.  Furthermore, the term $\sqrt{N/m}$ causes their bound to suffer from the curse of dimensionality, since $N = \lambda^{-O(n)}$ in the worst case, where $n$ is the input dimensionality, meanwhile our bound does not. Those reasons suggest that the bound (\ref{eq-HKK}) is loose and significantly inferior to ours. Furthermore, {their bound cannot be applied to the cases of discontinuity}, while our result applies well to those cases as shown by Corollary~\ref{cor-local-Lipschitz-generalization}.

\citet{than2025LocalRobustness} recently provided an extensive analysis about the connection between robustness and generalization ability of a model. Their theories suggest that a model should be locally robust at every small areas of the input space to ensure a high performance. Robustness in \citep{than2025LocalRobustness} refers to the change of the loss w.r.t. a change in the input, which is very different from the algorithmic robustness by \citep{xu2012robustnessGeneralize,kawaguchi22RobustGen}. Our bound (\ref{eq-thm-local-Lipschitz-generalization}) has the same nature with those by \cite{than2025LocalRobustness}, which are model-specific. In fact, our bound is a corollary of the results by \cite{than2025LocalRobustness}, since we focus on Lipschitz continuity (a specific property of the loss function). Nonetheless, Corollary~\ref{cor-local-Lipschitz-generalization} extends the horizon to analyze a wider class of practical models.

\subsubsection{Consequences for normalized DNNs}

The basic property of composite function tells that $\| f, \vx \|_{Lip} \le L_f \| \vh, \vx \|_{Lip}$ when $f$ is $L_f$-Lipschitz continuous w.r.t $\vh$. Combining this fact with Corollary~\ref{cor-FCN-Lipschitz-constant-BN} and Theorem~\ref{thm-local-Lipschitz-generalization} results in the followings.

\begin{corollary}  \label{cor-normalized-DNN-Generalization}
Consider a neural network $\vh$ defined in Definition~\ref{def-FCN}, and its normalized version $\vh_{no}$. Given the assumption in  Theorem~\ref{thm-local-Lipschitz-generalization} and notations in Corollary~\ref{cor-FCN-Lipschitz-constant-BN} and Corollary~\ref{cor-FCN-Lipschitz-constant-LN}, assume further that loss $f(\vh, \cdot)$ is $L_f$-Lipschitz continuous w.r.t $\vh$. The followings hold with probability at least $1-\delta$: 
{\small \begin{eqnarray}
\label{eq-thm-Lipschitz-generalization-DNN}
F(P, \vh) - F(\mD, \vh) &\le&   g( \mD, \delta)  + \omega(\vh)  \\
\nonumber
F(P, \vh_{no}) - F(\mD, \vh_{no}) &\le&  g( \mD, \delta) + \omega(\vh_{no}) \sigma^{-K} \;\;\;\;\; \;\;\;\;\;\;\;  (\BN) \\
\nonumber
 F(P, \vh_{no}) - F(\mD, \vh_{no}) &\le&   g( \mD, \delta) + \omega(\vh_{no})  \prod_{k=1}^{K} 1/{\sigma_{k}}   \;\;\;\;\;  (\LN)
\end{eqnarray}}
where $\omega(\vh) = L_f P_w(\vh) \sum_{i \in \mT} \frac{m_i}{m} \lambda_i$.
\end{corollary}

This theorem suggests that $\BN$ and $\LN$ can significantly reduce the generalization error of a DNN. Without normalization, quantity $\omega(\vh)$ can be of order $P_w(\vh)$ which can be extremely large in practice.  In practice, the product $\prod_{k=1}^{K}  1/{\sigma_{k}}$ can be exponentially small, as pointed out in Subsection~\ref{subsec-smoothen} and Figure~\ref{fig:variance-before-normalization}, suggesting that normalization operators can significantly improve generalization ability of a normalized DNN.

\textit{Comparison with existing bound for normalized DNNs:} To the best of our knowledge, theoretical results on generalization ability of normalized DNNs still remain limited. The only existing bound for normalized DNNs was developed by \cite{lyu22understandingNorm}. For a function $\vh(\vx, \mW)$ with parameters $\mW$, \cite{lyu22understandingNorm} can incorporate the \textit{Spherical sharpness} $\tau$ of the training loss to show that 
{\small \begin{equation}
\label{eq-LLA}
\E_{\vu} \left[F(P, \vh(\vx, \mW + \vu))\right]   \le  F(\mD, \vh) + \frac{\tau\mu^2 }{2} + \frac{O(\ln m)}{\mu \sqrt{m}} + const., \tag{LLA}
\end{equation}}
where each element of $\vu$ follows the normal distribution with mean 0 and variance $\mu^2/n$. This result however is limited, as it only applies to a \textit{smoothed} version of the test error. Bridging this to the true test error would require setting $\mu \rightarrow 0$, which in turn causes the third term of (\ref{eq-LLA}) to diverge. This reveals a critical trade-off: achieving a more accurate error estimate (via a smaller $\mu$) necessitates a quadratic increase in sample size. This indicates that their bound is less sample-efficient than the one we propose. Furthermore, a more significant drawback is that bound (\ref{eq-LLA}) cannot capture the cumulative benefits of multiple normalization layers, whereas our bound can.

In other work, \cite{cisneros25WN} introduced a new bound that successfully demonstrates the generalization benefit of Weight Normalization (WN). Their analysis, however, is confined to a specific architecture where each layer's normalized weight matrix has a Frobenious norm of 1. While their resulting $O(\rho_1^2/\sqrt{m})$ bound (where $\rho_1$ is the radius of the last layer's weight domain) effectively highlights WN's advantage, it is not generalizable. It cannot be applied to other normalization methods. Our bound, in sharp contrast, is broadly applicable to many influential normalizers, as shown in Corollary~\ref{cor-normalized-DNN-Generalization}.

\section{Conclusion} \label{sec-Conclusion}

This work establishes a unified theoretical framework for understanding the role of normalization in DNNs through the lens of Lipschitz-based capacity control. We have shown that unnormalized networks can possess exponentially large or small Lipschitz constants --- both with respect to weights and inputs --- resulting in uncountably many configurations prone to overfitting, underfitting, or gradient instability. These findings formally connect the empirical challenges of sharp loss landscapes and unstable gradients to the intrinsic capacity of the network.

In contrast, normalization operations  introduce an explicit and exponential reduction in the network's Lipschitz constant. This reduction simultaneously smooths the optimization landscape and constrains the functional capacity of the model, leading to faster convergence and improved generalization. Our theoretical analysis further bridges the gap between local Lipschitz continuity and generalization performance, providing a principled explanation for why normalized DNNs generalize well even when global Lipschitz continuity does not hold.

Beyond offering a rigorous explanation for the empirical success of normalization methods, our results highlight a broader principle: normalization acts as an implicit but powerful form of regularization that scales exponentially with network depth. The framework developed here lays the groundwork for future exploration of normalization-inspired designs in diverse architectures  and for developing new normalization schemes that explicitly optimize capacity control for both stability and generalization.

Nonetheless, our work has certain limitations. First, our theoretical analysis omits the effect of the scale and shift parameters commonly included in normalization layers. These parameters are known to enhance the representational and approximation capacity of DNNs, yet their precise role in the generalization ability of trained models remains unclear. Second, while our analysis primarily focuses on feedforward networks, extending these results to more complex architectures (e.g.,  transformer-based models) may require substantial additional effort and new theoretical tools.


\acks{We would like to thank Bum Jun Kim for pointing out some typos or inconsistencies.}


\newpage

\appendix


\section{Properties of some normalizers} \label{app-NO}

\subsection{Properties of Batch norm} \label{app-BN}

\begin{proof}[Proof of Lemma \ref{lem-BN-Jacobian-norm}]

By definition, we have $\partial \BN(x_k, \epsilon) / \partial x_k = 1/{\sqrt{\sigma_k^2 + \epsilon}}$, while $\partial \BN(x_k, \epsilon) / \partial x_j = 0$ for any $j \neq k$. So $\left\|\frac{\partial \BN(\vx, \epsilon)}{\partial \vx} \right\|  = \| 1/{\boldsymbol{\sigma}} \|$. Therefore 

\[\| \BN(\vx, \epsilon) \|_{Lip} = \sup_{\vx \in \R^n} \left\| \frac{\partial \BN(\vx, \epsilon)}{\partial \vx} \right\|  
= \sup_{\vx \in \R^n} \| 1/{\boldsymbol{\sigma}} \| = \| 1/{\boldsymbol{\sigma}} \|\]
which completes the proof.
\end{proof}

\subsection{Properties of Layer norm} \label{app-LN}

We first have the following observation. 

\begin{lemma}\label{lem-LN-Jacobian}
Given $\epsilon>0$, and $\vx \in \R^n$, we have 
\[
\frac{\partial \LN(\vx, \epsilon)}{\partial \vx} = \frac{1}{\sqrt{\sigma^2_n + \epsilon}} \left(\mI - \frac{\vy \vy^T}{\|\vy\|^2 + n \epsilon}\right) \left(\mI - \frac{1}{n} \bm{1}\right)
\]
where $\vy = (\mI - \frac{1}{n} \bm{1})\vx$, $\bm{1}$ is the $n \times n$ matrix of $1$'s,  and $\mI$ is the identity matrix.
\end{lemma}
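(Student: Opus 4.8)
The plan is a direct matrix-calculus computation, arranged so that the all-ones matrix is absorbed into a single idempotent ``centering'' matrix before differentiating. Write $\mC = \mI - \frac{1}{n}\bm{1}$ for the $n\times n$ centering matrix and set $\vy = \vx\mC$, so that the $i$-th entry of $\vy$ is $x_i - \mu$. First I would record the two facts about $\mC$ that drive everything: $\mC$ is symmetric and idempotent ($\mC^2 = \mC$), and hence $\vy\mC = \vx\mC^2 = \vx\mC = \vy$ (``centering a centered vector changes nothing''). Next I would re-express the normalizer in terms of $\vy$: since $\sigma_n^2 = \tfrac{1}{n}\|\vy\|_2^2$, we have $\sigma_n^2 + \epsilon = \tfrac{1}{n}\big(\|\vy\|_2^2 + n\epsilon\big)$, so, writing $r := \sqrt{\|\vy\|_2^2 + n\epsilon} > 0$ (positive because $\epsilon>0$),
\[
\LN(\vx,\epsilon) \;=\; \frac{\vx - \mu}{\sqrt{\sigma_n^2+\epsilon}} \;=\; \frac{\sqrt{n}}{r}\,\vy .
\]

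Then I would differentiate. The linear map $\vx\mapsto\vy=\vx\mC$ has Jacobian $\mC$, and from $r^2 = \|\vy\|_2^2 + n\epsilon$ one gets $\partial r/\partial\vx = \tfrac{1}{r}\,\vy\mC = \tfrac{1}{r}\,\vy$, again using $\vy\mC=\vy$. The product/quotient rule applied to $\vy/r$ then yields
\[
\frac{\partial}{\partial\vx}\Big(\frac{\vy}{r}\Big)
\;=\; \frac{1}{r}\,\mC \;-\; \frac{\vy^{\top}}{r^2}\,\frac{\partial r}{\partial\vx}
\;=\; \frac{1}{r}\,\mC \;-\; \frac{\vy^{\top}\vy}{r^3}
\;=\; \frac{1}{r}\Big(\mC - \frac{\vy^{\top}\vy}{r^2}\Big),
\]
so $\partial\LN/\partial\vx = \tfrac{\sqrt{n}}{r}\big(\mC - \tfrac{\vy^{\top}\vy}{r^2}\big)$. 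Finally I would repackage this into the stated factored form: using $\vy\mC=\vy$ once more, $\vy^{\top}\vy\,\mC = \vy^{\top}(\vy\mC) = \vy^{\top}\vy$, hence $\mC - \tfrac{\vy^{\top}\vy}{r^2} = \big(\mI - \tfrac{\vy^{\top}\vy}{r^2}\big)\mC$; and $\tfrac{\sqrt n}{r} = 1/\sqrt{(\|\vy\|_2^2+n\epsilon)/n} = 1/\sqrt{\sigma_n^2+\epsilon}$ while $r^2 = \|\vy\|_2^2 + n\epsilon$. Substituting $\mC = \mI - \tfrac{1}{n}\bm{1}$ back in reproduces exactly
\[
\frac{\partial\LN(\vx,\epsilon)}{\partial\vx}
\;=\; \frac{1}{\sqrt{\sigma_n^2+\epsilon}}\Big(\mI - \frac{\vy^{\top}\vy}{\|\vy\|_2^2 + n\epsilon}\Big)\Big(\mI - \frac{1}{n}\bm{1}\Big).
\]

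The only genuinely delicate point is the bookkeeping: keeping the row/column convention for $\vy$ straight, and justifying that the rank-one correction $\mI - \vy^{\top}\vy/r^2$ can be pulled out as a \emph{left} factor of $\mC$. Both hinge on the single identity $\vy\mC=\vy$ (equivalently, $\mC$ and $\vy^{\top}\vy$ commute), which forces the two a priori distinct orderings $\mC\big(\mI - \vy^{\top}\vy/r^2\big)$ and $\big(\mI - \vy^{\top}\vy/r^2\big)\mC$ to coincide and to equal $\mC - \vy^{\top}\vy/r^2$. Everything else is routine differentiation; in particular $r>0$ throughout (since $\epsilon>0$), so no division-by-zero arises and the formula holds on all of $\R^n$.
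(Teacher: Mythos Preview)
Your proof is correct and follows essentially the same approach as the paper: both are direct Jacobian computations of $\LN$ via $\vy=\vx\mC$. The only organizational difference is that the paper computes $\partial\LN/\partial\vy$ element-wise and then applies the chain rule $\partial\LN/\partial\vx=(\partial\LN/\partial\vy)(\partial\vy/\partial\vx)$, so the right factor $\mC$ appears automatically, whereas you differentiate in one pass and then recover the factorization algebraically from the idempotence identity $\vy\mC=\vy$.
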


\begin{proof}
We can explicitly write $y_i = x_i - \mu$ for the $i$-th element in $\vy$. The $i$-th output of LN is $z_i = \frac{y_i}{\sqrt{\sigma_n^2 + \epsilon}}$. 
Recall that $\sigma_n^2 + \epsilon = \frac{1}{n} \sum_{j} y_j^2 + \epsilon = \frac{1}{n} \|\vy\|^2 + \epsilon$.
The partial derivative with respect to the centered vector $\vy$ is:
\begin{eqnarray}
\frac{\partial z_i}{\partial y_t} 
&=& \frac{\delta_{it} \sqrt{\frac{1}{n}\|\vy\|^2 + \epsilon} - y_i \left( \frac{1}{2\sqrt{\frac{1}{n}\|\vy\|^2 + \epsilon}} \cdot \frac{2 y_t}{n} \right)}{\frac{1}{n}\|\vy\|^2 + \epsilon} \\
&=& \frac{1}{\sqrt{\sigma_n^2 + \epsilon}} \left( \delta_{it} - \frac{y_i y_t}{n(\sigma_n^2 + \epsilon)} \right) \\
&=& \frac{1}{\sqrt{\sigma_n^2 + \epsilon}} \left( \delta_{it} - \frac{y_i y_t}{\|\vy\|^2 + n\epsilon} \right)
\end{eqnarray}
where $\delta_{it}=1$ for $i=t$ and $\delta_{it}=0$ for $i \ne t$. In matrix form (Jacobian w.r.t $\vy$):
\begin{equation}
\frac{\partial \LN}{\partial \vy} = \frac{1}{\sqrt{\sigma^2_n + \epsilon}} \left(\mI - \frac{\vy \vy^T}{\|\vy\|^2 + n \epsilon}\right)
\end{equation}
Note that $\frac{\partial \vy}{\partial \vx} = \mI - \frac{1}{n}\bm{1}$. By the chain rule $\frac{\partial \LN}{\partial \vx} = \frac{\partial \LN}{\partial \vy} \frac{\partial \vy}{\partial \vx}$, we obtain:
\begin{equation}
\nonumber
\frac{\partial \LN(\vx, \epsilon)}{\partial \vx} = \frac{1}{\sqrt{\sigma^2_n + \epsilon}} \left(\mI - \frac{\vy \vy^T}{\|\vy\|^2 + n \epsilon}\right) \left(\mI - \frac{1}{n} \bm{1}\right)
\end{equation}
\end{proof}

\begin{proof}[Proof of Lemma \ref{lem-LN-Jacobian-norm}]
From Lemma \ref{lem-LN-Jacobian}, the Jacobian of the Layer Normalization function is given by:
\begin{equation}
\label{eq-Jacobian-Matrix}
\frac{\partial \LN(\vx, \epsilon)}{\partial \vx} = \frac{1}{\sqrt{\sigma^2_n + \epsilon}} \left(\mI - \frac{\vy \vy^T}{\|\vy\|^2 + n \epsilon}\right) \left(\mI - \frac{1}{n} \bm{1}\right)
\end{equation}
where $\vy = (\mI - \frac{1}{n} \bm{1})\vx$ is the centered input vector. We seek to bound the spectral norm of this Jacobian.

Using the sub-multiplicative property of the spectral norm, we have:
\begin{eqnarray}
\nonumber
\left\| \frac{\partial \LN(\vx, \epsilon)}{\partial \vx} \right\| 
&\le& \frac{1}{\sqrt{\sigma^2_n + \epsilon}} \left\| \mI - \frac{\vy \vy^T}{\|\vy\|^2 + n \epsilon} \right\| \left\| \mI - \frac{1}{n} \bm{1} \right\|
\end{eqnarray}

We analyze the two matrix terms separately:
\begin{enumerate}
    \item Let $\mP_1 = \mI - \frac{1}{n} \bm{1}$. This is the centering matrix, which projects vectors onto the subspace orthogonal to the vector of all ones. As a projection matrix (and assuming $n > 1$), its eigenvalues are either 0 or 1. Thus, its spectral norm is $\|\mP_1\| = 1$.
    
    \item Let $\mP_2 = \mI - \frac{\vy \vy^T}{\|\vy\|^2 + n \epsilon}$. 
    \begin{itemize}
        \item If $\epsilon = 0$, $\mP_2$ is a standard projection matrix orthogonal to $\vy$, and $\|\mP_2\| = 1$.
        \item If $\epsilon > 0$, $\mP_2$ scales the component of a vector parallel to $\vy$ by a factor of $1 - \frac{\|\vy\|^2}{\|\vy\|^2 + n \epsilon} = \frac{n \epsilon}{\|\vy\|^2 + n \epsilon} \in (0, 1)$, while leaving components orthogonal to $\vy$ unchanged (eigenvalue 1). Thus, the maximum singular value is still 1, so $\|\mP_2\| \le 1$.
    \end{itemize}
\end{enumerate}

Substituting these bounds back into the inequality:
\begin{eqnarray}
\left\| \frac{\partial \LN(\vx, \epsilon)}{\partial \vx} \right\| 
&\le& \frac{1}{\sqrt{\sigma^2_n + \epsilon}} \cdot 1 \cdot 1 \\
&=& \frac{1}{\sqrt{\sigma^2_n + \epsilon}}
\end{eqnarray}

Finally, taking the supremum over all $\vx$:
\begin{eqnarray}
\| \LN \|_{Lip} &=& \sup_{\vx \in \R^n} \left\| \frac{\partial \LN(\vx, \epsilon)}{\partial \vx} \right\| \\
&\le& \sup_{\vx \in \R^n} \frac{1}{\sqrt{\sigma^2_n(\vx) + \epsilon}} \\
&=& \frac{1}{\sigma}
\end{eqnarray}
which completes the proof.
\end{proof}

\subsection{Properties of Group norm} \label{app-GN}

\begin{proof}[Proof of Lemma \ref{lem-GN-Jacobian-norm}]

Denote $\vx_k$ be the part of a given $\vx$ correspoding to the index set $\mS_k$. Observe that $\frac{\partial \GN(\vx, \epsilon)}{\partial \vx}$ can be reorganized as

\begin{eqnarray}
\frac{\partial \GN(\vx, \epsilon)}{\partial \vx} &=& \bigcup_{k=1}^C \frac{\partial \GN(\vx, \epsilon)}{\partial \vx_k}
\end{eqnarray}
Therefore
\begin{eqnarray}
\label{app-eq-GN-02}
\left\| \frac{\partial \GN(\vx, \epsilon)}{\partial \vx} \right\| 
&\le&   \sum_{k=1}^C  \left\| \frac{\partial \GN(\vx, \epsilon)}{\partial \vx_k} \right\|
\end{eqnarray}

Remember that for each group $\mS_k$, the $\GN$ operator is in fact $\LN$. Hence, $\frac{\partial \GN(\vx, \epsilon)}{\partial \vx_k} = \frac{\partial \LN(\vx_k, \epsilon)}{\partial \vx_k}$.  The proof of Lemma~\ref{lem-LN-Jacobian-norm} implies $\left\| \frac{\partial \LN(\vx_k, \epsilon)}{\partial \vx_k} \right\| \le \frac{1}{\sqrt{\sigma_{k}^2(\vx) + \epsilon}}  $. Combining this with (\ref{app-eq-GN-02}) will arrive at 

\[\left\| \frac{\partial \GN(\vx, \epsilon) }{\partial \vx} \right\|  \le    \sum_{k=1}^C   \frac{1}{\sqrt{\sigma_{k}^2(\vx) + \epsilon}}  \]
Therefore 
\begin{eqnarray}
\sup_{\vx \in \R^n} \left\| \frac{\partial \GN(\vx, \epsilon) }{\partial \vx} \right\| 
&\le& \sup_{\vx \in \R^n}   \sum_{k=1}^C   \frac{1}{\sqrt{\sigma_{k}^2(\vx) + \epsilon}}  \\ 
&\le& \sum_{k=1}^C   \frac{1}{\sqrt{\sigma_{k}^2 + \epsilon}} 
\end{eqnarray}
which proves the first statement.

The second statement is a straightforward consequence of the first result, completing the proof.
\end{proof}

\section{DNNs with normalizations} \label{app-DNN-proofs-normalization}

\begin{proof}[Proof of Lemma \ref{lem-Lipschitz-FCN}]

$\vh$ can be expressed as $\vh = \vh_K = g_K(\mW_{K}\vh_{K-1})$. Using a basic property of Lipschitz continuity for composite functions, we have
\begin{eqnarray}
\|\vh,\vx \|_{Lip} 
&\le& \| g_{K} \|_{Lip} \| \mW_{K} \vh_{K-1},\vx \|_{Lip} \\
&\le& \| \mW_{K} \| \| \vh_{K-1},\vx \|_{Lip} \\
&\le&  \| \mW_{K} \| \| \vh_{K-1},\vx \|_{Lip} \\
&\cdots& \\
&\le& \prod_{k=1}^{K}   \| \mW_{k} \|   
\end{eqnarray} 
completing the proof.
\end{proof}

\begin{proof}[Proof of Theorem \ref{lem-Lipschitz-FCN-normalize}]

Consider an $\vh_{no}$. Using a basic property of Lipschitz continuity for composite functions, we have
\begin{eqnarray}
\nonumber
\|\vh_{no},\vx \|_{Lip} 
&\le& \| g_{K} \|_{Lip} \| \NO_{K} \|_{Lip} \| \mW_{K} \vh_{K-1},\vx \|_{Lip} \\
\nonumber
&\le& \| g_{K} \|_{Lip} \| \NO_{K} \|_{Lip} \| \mW_{K} \| \| \vh_{K-1},\vx \|_{Lip} \\
\nonumber
&\le&  \| \mW_{K} \| \| \NO_{K} \|_{Lip} \| \vh_{K-1},\vx \|_{Lip} \\
\nonumber
&\cdots& \\
&\le& \prod_{k=1}^{K}   \| \mW_{k} \| \| \NO_{k} \|_{Lip}  
\end{eqnarray} 
completing the proof.
\end{proof}

\begin{proof}[Proof of Theorem \ref{thm-Lipschitz-FCN-normalize-weight}]

Consider a model $\vh_{no} = \vh_K = g_K(\NO_{K}(\vy_K))$. A basic property of Lipschitz continuity for composite functions suggests that \\ $\left\| \vh_{no}, \vy_K \right\|_{Lip} \le \| g_{K} \|_{Lip} \| \NO_{K} \|_{Lip} \le \| \NO_{K} \|_{Lip}$. 

For any index $i < K$, observe that \\
$\left\| \vh_{no}, \vy_i \right\|_{Lip} = \left\| \vh_K, \vy_i \right\|_{Lip}$
\begin{flalign*}
&\le \| g_{K} \|_{Lip} \| \NO_{K} \|_{Lip} \| \mW_{K} \vh_{K-1}, \vy_i  \|_{Lip} \\
&\le \| g_{K} \|_{Lip} \| \NO_{K} \|_{Lip} \| \mW_{K} \| \| \vh_{K-1} , \vy_i \|_{Lip} \\
&\le  \| \mW_{K} \| \| \NO_{K} \|_{Lip} \| \vh_{K-1} , \vy_i \|_{Lip} \\
&\cdots \\
&\le (\prod_{k=i+2}^{K}   \| \mW_{k} \| \| \NO_{k} \|_{Lip}  )  \| \vh_{i+1}, \vy_i  \|_{Lip} \\
&\le (\prod_{k=i+2}^{K}   \| \mW_{k} \| \| \NO_{k} \|_{Lip}  ) \| \NO_{i+1} \|_{Lip} \| \mW_{i+1} \| \| \vh_{i}, \vy_i  \|_{Lip} \\
&\le (\prod_{k=i+1}^{K}   \| \mW_{k} \| \| \NO_{k} \|_{Lip}  ) \| \NO_{i} \|_{Lip} \\
&\le (\prod_{k=i+1}^{K}   \| \mW_{k} \|) \prod_{k=i}^{K} \| \NO_{k} \|_{Lip} 
\end{flalign*} 

Similarly, we next consider the Lipschitz constant w.r.t. the weights. Note  that $\left\| \vh_{no}, \mW_i \right\|_{Lip} \le \left\| \vh_{no}, \vy_i \right\|_{Lip}  \| \vy_{i}, \mW_{i} \|_{Lip}$. Furthermore, $\| \vy_{i}, \mW_{i} \|_{Lip} =  \sup_{\vh_{i-1}} \|\vh_{i-1}\| = A_{i-1}$ due to $\vy_{i} = \mW_{i} \vh_{i-1}$. As a result, $\left\| \vh_{no}, \mW_i \right\|_{Lip} \le A_{i-1} \left\| \vh_{no}, \vy_i \right\|_{Lip} $,
completing the proof.
\end{proof}

\section{Unnormalized DNNs} \label{app-DNN-proofs-no-norm}

\begin{proof}[Proof of Theorem \ref{thm-Lipschitz-FCN-lower-bound}]

Let $n_i$ be the width of layer $i$ of a DNN in $\gH$, where $n_0 = n$. For simplicity, we denote $\mI_i$ as the following matrix of size $n_{i} \times n_{i-1}$:
\begin{equation}
\label{app-thm-Lipschitz-FCN-lower-bound-01}
\mI_i = \begin{cases}
(A_i, \textbf{0}_i) & \text{ if } n_{i-1} > n_i, \\
A_i  & \text{ if } n_{i-1} = n_i, \\
\left(
\begin{matrix}
A_i \\
\textbf{0}_i
\end{matrix}\right) & \text{ otherwise }
\end{cases}
\end{equation} 
where $A_i$ is the identity matrix of size  $\min\{n_i, n_{i-1}\}$, $\textbf{0}_i$ is the matrix of zeros with an appropriate size. For a given vector $\vv$ of size $n_{i-1}$, $\vv^{(n_i)}$ is the truncated/zero-padded vector of size  $n_i$. One can easily observe that $\vv^{(n_i)} = \mI_i \vv$ and $ReLU(\vv^{(n_i)}) = (ReLU(\vv))^{(n_i)}$.

Consider the member $\vh^*(\vx) \equiv \vh_K(\vx)$ whose the weight matrix at layer $i$ is $\mW_i = a_i \mI_i$. Observe that, for any $k >0$, 
\begin{align}
ReLU(\vh_{k}) &= ReLU(ReLU(\mW_k \vh_{k-1})) &\\
&= ReLU(\mW_k \vh_{k-1}) &\\
&= ReLU(a_k \mI_k \vh_{k-1}) &\\
&= a_k ReLU( \vh_{k-1}^{(n_k)}) &\\
\label{app-thm-Lipschitz-FCN-lower-bound-02}
&= a_k (ReLU( \vh_{k-1}))^{(n_k)} &
\end{align}
Then it follows that
\begin{align}
\vh_{K} &= ReLU(\mW_K \vh_{K-1}) & \\
&= ReLU(a_K \mI_K \vh_{K-1}) & \\
&= a_K ReLU( \vh_{K-1}^{(n_K)}) & \\
&= a_K (ReLU( \vh_{K-1}))^{(n_K)} & \\
&= a_K a_{K-1} \left(ReLU( \vh_{K-2})\right)^{(n_{K-1},n_K)}  & (\text{Due to (\ref{app-thm-Lipschitz-FCN-lower-bound-02})}) \\
\nonumber
&\cdots & \\
\label{app-thm-Lipschitz-FCN-lower-bound-03}
&= a_K a_{K-1}\cdots a_{1} \left(ReLU( \vh_{0})\right)^{(n_{1},...,n_K)}  & (\text{By  induction}) \\
&= a_K a_{K-1}\cdots a_{1} \left(ReLU( \vx)\right)^{(n_{1},...,n_K)}  & \\
\label{app-thm-Lipschitz-FCN-lower-bound-04}
&= a_K a_{K-1}\cdots a_{1} ReLU\left(\vx^{(n_{1},...,n_K)} \right) &
\end{align}
where we have used the fact that $ \vh_0 \equiv \vx$. 

By definition, the operation $\vv^{(m)}$ is the truncation/zero-padding operator which keep a part of vector $\vv$. Therefore $\|\vx^{(m)}, \vx\|_{Lip} = 1$ for any integer $m$. Furthermore, ReLU is 1-Lipschitz continuous in the input. These suggest that $\| ReLU\left( \vx^{(n_{1},...,n_K)} \right), \vx\|_{Lip} = \|  \vx^{(n_{1},...,n_K)}, \vx\|_{Lip} = 1$. Combining this with (\ref{app-thm-Lipschitz-FCN-lower-bound-04}), we obtain $\| \vh^*,\vx \|_{Lip} = a_K a_{K-1}\cdots a_{1}$, completing the proof.
\end{proof}

\begin{proof}[Proof of Theorem \ref{thm-Lipschitz-FCN-lower-bound-weight}]

Consider the member $\vh^* \equiv \vh_K$ whose the weight matrix at layer $k$ is $\mW_k = a_k \mI_k$ for all $k > i$, where $ \mI_k$ is defined in Equation~(\ref{app-thm-Lipschitz-FCN-lower-bound-01}). Following the same arguments as the proof of Theorem~\ref{thm-Lipschitz-FCN-lower-bound} before, we can obtain a similar result as Equation~(\ref{app-thm-Lipschitz-FCN-lower-bound-03}):
\begin{align}
\vh_{K} &= a_K a_{K-1}\cdots a_{i+1} \left(ReLU( \vh_{i})\right)^{(n_{i+1},...,n_K)}  &  \\
&= a_K a_{K-1}\cdots a_{i+1} \left(ReLU( ReLU( \vy_{i}))\right)^{(n_{i+1},...,n_K)}  &  \\
&= a_K a_{K-1}\cdots a_{i+1} \left(ReLU( \vy_{i})\right)^{(n_{i+1},...,n_K)}  &  \\
\label{app-thm-Lipschitz-FCN-lower-bound-weight-01}
&= a_K a_{K-1}\cdots a_{i+1} ReLU\left(\vy_{i}^{(n_{i+1},...,n_K)} \right) &
\end{align}
Therefore \\
$\| \vh_K, \vy_{i}\|_{Lip} $
\begin{align}
&= a_K a_{K-1}\cdots a_{i+1} \left\| ReLU\left(\vy_{i}^{(n_{i+1},...,n_K)} \right), \vy_i\right\|_{Lip} & \\
&= a_K a_{K-1}\cdots a_{i+1} \left\| \vy_{i}^{(n_{i+1},...,n_K)}, \vy_i\right\|_{Lip} & \\
&= a_K \cdots a_{i+1} &
\end{align}
 
Similarly, \\
$\| \vh_K, \mW_{i}\|_{Lip} $
\begin{align}
&= a_K a_{K-1}\cdots a_{i+1} \left\| ReLU\left(\vy_{i}^{(n_{i+1},...,n_K)} \right), \mW_i\right\|_{Lip} & \\
&= a_K a_{K-1}\cdots a_{i+1} \left\| \vy_{i}^{(n_{i+1},...,n_K)} , \mW_i\right\|_{Lip} & \\
&= a_K a_{K-1}\cdots a_{i+1} \left\| (\mW_i\vh_{i-1})^{(n_{i+1},...,n_K)} , \mW_i\right\|_{Lip} & \\
&= a_K a_{K-1}\cdots a_{i+1} \left\| \vh_{i-1}^{(n_{i+1},...,n_K)}\right\|&
\end{align}
where we have used the facts that $\left\| (\mW\vx , \mW\right\|_{Lip} = \|\vx\|$ and $\vx^{(n_{i+1},...,n_K)}$ is a series of truncation or zero-padding to $\vx$. This completes the proof.
\end{proof}

\begin{proof}[Proof of Theorem \ref{thm-FCN-gradient-weight}]

Consider  $\vh^*(\vx) = g(\mW \vh_{K}(\vx))$ for $\mW= c \mathbf{1}^\top$ and $\vh_{K}$ whose the weight matrix at layer $k$ is $\mW_k = a_k \mI_k$ for all $k > i$, where $ \mI_k$ is defined in Equation~(\ref{app-thm-Lipschitz-FCN-lower-bound-01}). Following the same arguments as the proof of Theorem~\ref{thm-Lipschitz-FCN-lower-bound-weight} before, we can obtain a similar result as Equation~(\ref{app-thm-Lipschitz-FCN-lower-bound-weight-01}):
\begin{align}
\vh_K &= a_K \cdots a_{i+1} ReLU\left(\vy_{i}^{(n_{i+1},...,n_K)} \right)  \\ 
\label{app-thm-FCN-gradient-01}
&= a_K \cdots a_{i+1} ReLU\left((\mW_i\vh_{i-1})^{(n_{i+1},...,n_K)} \right) 
\end{align}

Then it follows that for  $t$-th row of $\mW_{i}$:
\begin{align}
\frac{\partial  L}{\partial \mW_{it}} 
 &= \frac{\partial  }{\partial \mW_{it}}\left(  \frac{1}{m}\sum_{\vx \in \mD} f(\vh, \vx) \right) \\
&=  \frac{1}{m}\sum_{\vx \in \mD} \frac{\partial  (f \circ g)}{\partial u}   \frac{\partial  u}{\partial \vh_{K}}  \frac{\partial  \vh_{K}(\vx)}{\partial \mW_{it}} \\
\label{app-thm-FCN-gradient-02}
&=  \frac{1}{m}\sum_{\vx \in \mD}  \frac{\partial  (f \circ g)}{\partial u}  \mW \frac{\partial  \vh_{K}(\vx)}{\partial \mW_{it}}
\end{align}
where $u(\vx) = \mW \vh_{K}(\vx)$.

Consider vector $\vv = ReLU\left((\mW_i\vh_{i-1})^{(n_{i+1},...,n_K)} \right)$, which is the series of truncation/zero-padding to $\mW_i\vh_{i-1}$ and then followed by $ReLU$ operation. As a result the element at row $k$ is
{\small \begin{align}
\nonumber
v_k = \begin{cases}
\mW_{ik}\vh_{i-1}  & \text{ if } k \le \min \{n_{i+1},...,n_K\} \text{ and } \mW_{ik} \vh_{i-1} \ge 0 \\
0 & \text{ otherwise}
\end{cases}
\end{align}}
As a result 
{\small \begin{align}
\frac{\partial  v_k}{\partial \mW_{it}} 
=  \begin{cases}
 \vh_{i-1}^\top  & \text{ if } k=t, t \le \min \{n_{i+1},...,n_K\} \text{ and } \mW_{ik} \vh_{i-1} \ge 0 \\
\mathbf{0} & \text{ otherwise}
\end{cases}
\end{align}}
This suggests that $\mW \frac{\partial  \vv}{\partial \mW_{it}} = c \mathbf{1}^\top \frac{\partial  \vv}{\partial \mW_{it}} = c \vh_{i-1}^\top$ for any index $t \le \min \{n_{i+1},...,n_K\}$  satisfying  $\mW_{it} \vh_{i-1} \ge 0$. Combining this with (\ref{app-thm-FCN-gradient-02}) and the fact that $\vh_K(\vx) = a_K \cdots a_{i+1} \vv(\vx)$ due to (\ref{app-thm-FCN-gradient-01}) , we obtain 
\begin{align}
\nonumber
\frac{\partial  L}{\partial \mW_{it}} =  \frac{c a_K \cdots a_{i+1} }{m}\sum\limits_{\vx \in \mD}   \frac{\partial  (f \circ g)}{\partial u}  \gI_{it} (\vh_{i-1}(\vx))^\top 
\end{align}
for any $t \le \min \{n_{i+1},...,n_K\}$, completing the proof.
\end{proof}

\section{From local Lipschitz continuity to generalization} \label{app-Lipschitz-to-generalization}

\begin{proof}[Proof of Theorem \ref{thm-local-Lipschitz-generalization}]

Theorem 5 in \cite{than2025LocalRobustness} shows that, with probability at least $1-\delta$, we have 
\begin{eqnarray}
\label{eq-error-decomposition-01}
F(P, \vh) \le  F(\mD, \vh) +   {  \sum_{i \in \mT}  \frac{m_i}{m} \bar{\epsilon}_i(\vh) }  + g(\mD, \delta) 
\end{eqnarray}
where $\bar{\epsilon}_i(\vh) = \frac{1}{m_i} \sum_{\vs \in \mD_i} \E_{\vx \in \gX_i} | f(\vh,\vx) -  f(\vh,\vs) |$ represents the sensitivity of model $\vh$ in area $\gX_i$ and $\mD_i = \mD \cap \gX_i$.

Since $f(\vh, \vx)$ is  $L_i$-Lipschitz continuous on $\gX_i$, for any $i \in \mT$, we have 
$  | f(\vh, \vx) -  f(\vh, \vs) | \le L_i \| \vx -  \vs \| $ for all $\vx, \vs \in \gX_i $.  Therefore
\begin{eqnarray}
\nonumber
 \E_{\vx \in \gX_i} | f(\vh, \vx) -  f(\vh, \vs) | &\le& \E_{\vx \in \gX_i} [L_i \| \vx -  \vs \|] \\
 \nonumber
  &\le& L_i \E_{\vx \in \gX_i} [\| \vx -  \vs \|]
\end{eqnarray}
It suggests that $\bar{\epsilon}_i(\vh) \le \frac{1}{m_i} \sum_{\vs \in \mD_i} L_i \E_{\vx \in \gX_i} [\| \vx -  \vs \|] = L_i \lambda_i$. Combining this with (\ref{eq-error-decomposition-01}) completes the proof.
\end{proof}

\begin{proof}[Proof of Corollary \ref{cor-local-Lipschitz-generalization}]
Denote  $\bar{\gX}_0 = \gX \setminus \gX_0$, $a_0(\vh) = \E_{\vx \sim  P} [f(\vh,\vx) : \vx \in \gX_0]$, $\bar{a}_0(\vh) = \E_{\vx \sim  P} [f(\vh,\vx) : \vx \in \bar{\gX}_0]$. We can decompose the expected loss as $F(P, \vh) = P(\gX_0) a_0(\vh) + P(\bar{\gX}_0) \bar{a}_0(\vh) = \bar{a}_0(\vh)$, due to $P(\gX_0) =0$. Applying Theorem~\ref{thm-local-Lipschitz-generalization} for  $\bar{a}_0(\vh)$ will complete the proof.
\end{proof}

\section{Settings and more empirical evaluations} \label{app-empirical-evaluations}

In this section, we provide some empirical evaluations about the behavior of some DNNs, when using  $\BN$. 

\subsection{Experimental settings}

\textit{Dataset:} CIFAR10

\textit{Network architecture:} ResNet18, EfficientNet-B3 and a 10-layer ReLU network. For the ReLU network, we adopt a feedforward architecture consisting of a repeated ``Linear $\rightarrow$ ReLU'' pattern for a total of 10 layers. Each hidden layer contains 512 units, and the final output layer has 10 neurons corresponding to the 10 classes in the CIFAR10 dataset.

\textit{Training:} Those models are trained by standard Adam optimizer with learning rate of 0.001, batchsize=128, 100 epochs and weight decay of $10^{-4}$. Before training, data normalization is used  so that every training sample $\vx$ satisfies $\| \vx \| \in [0, 1]$. Those models are initialized with `He normal initialization'.

\subsection{Bahaviors of EfficientNet}

\begin{figure*}[t]
	\begin{minipage}[b]{0.74\textwidth} 
        \centering
        \includegraphics[width=.48\textwidth]{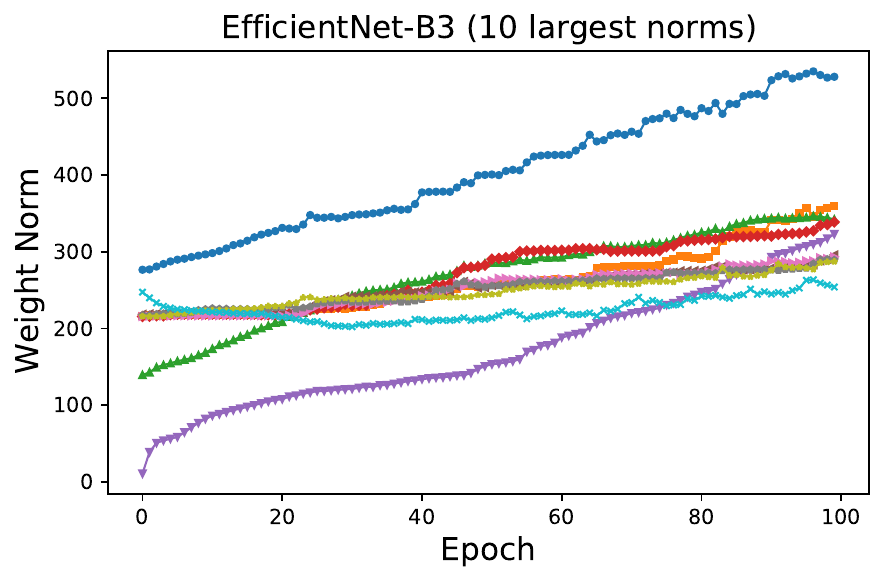} 
         \includegraphics[width=.48\textwidth]{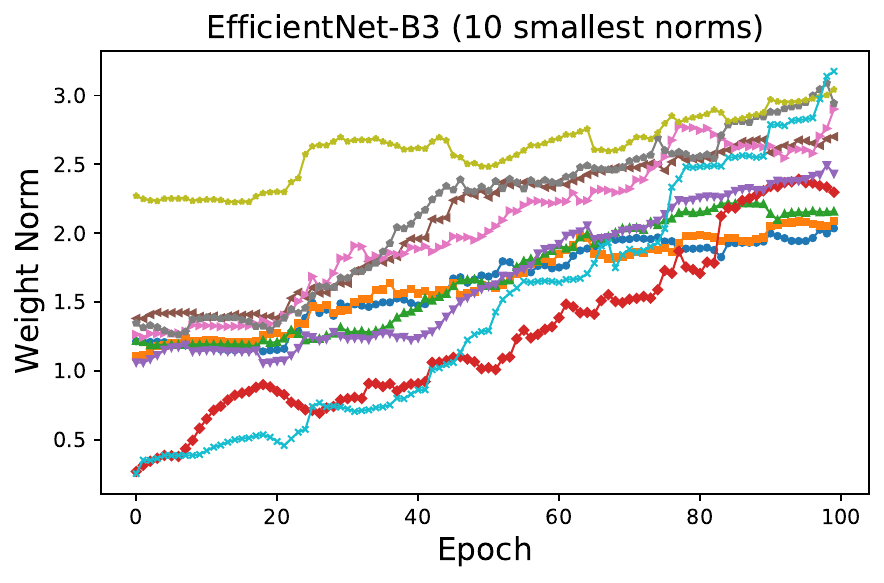} 
    \end{minipage}
     \begin{minipage}[b]{0.23\textwidth} 
             \centering
        \includegraphics[width=\textwidth]{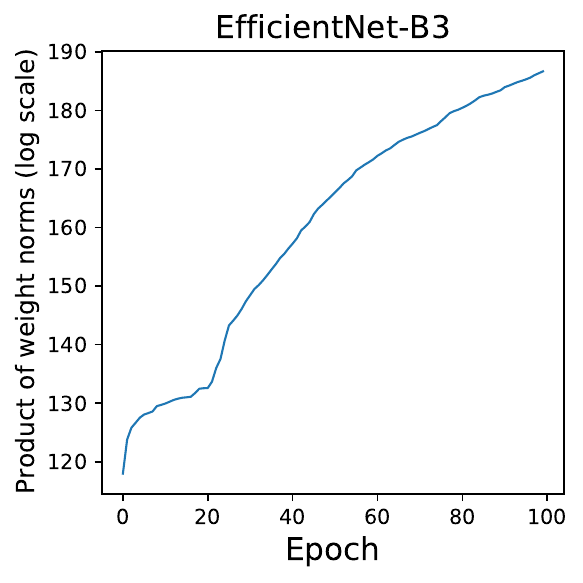}
         \end{minipage}
            
        \begin{minipage}[b]{0.74\textwidth} 
                \centering
		        \includegraphics[width=0.48\textwidth]{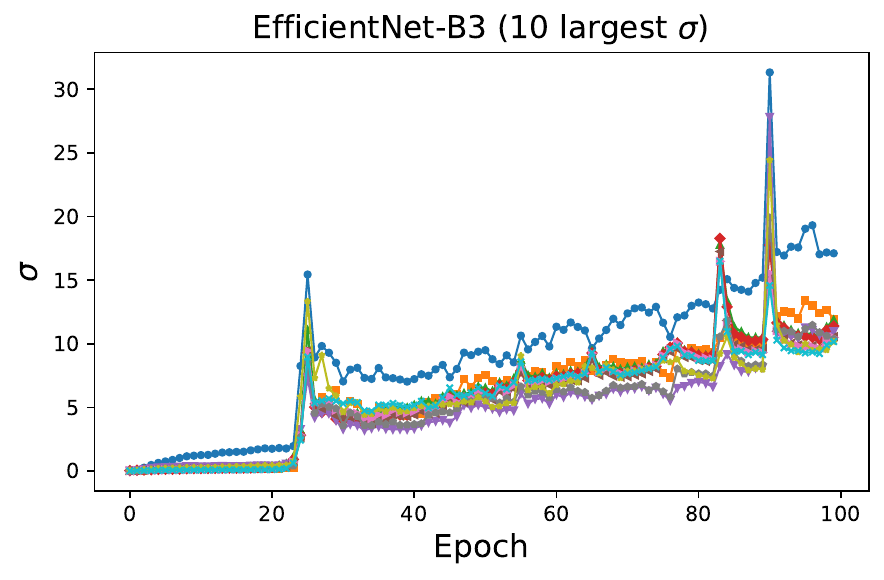}
		        \includegraphics[width=.48\textwidth]{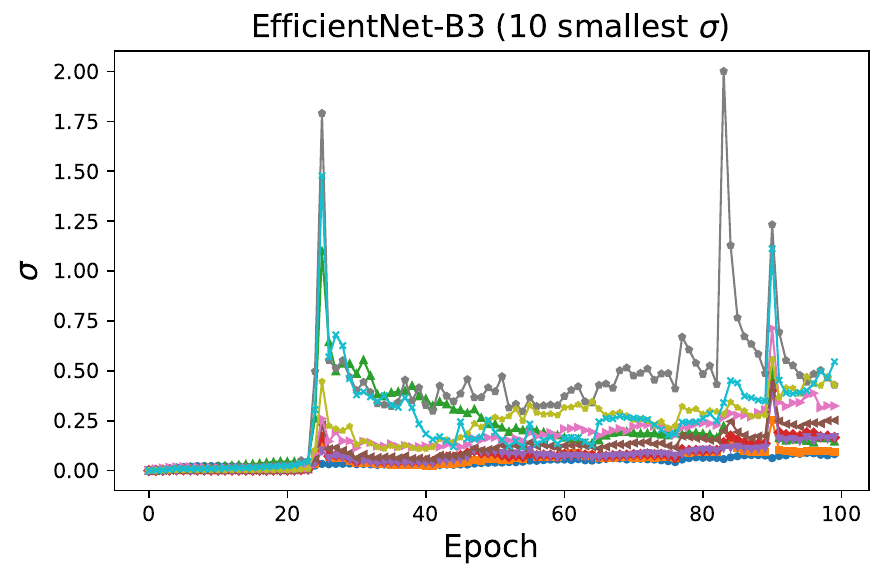}
        \end{minipage}
    \begin{minipage}[b]{0.23\textwidth} 
            \centering
            \includegraphics[width=\textwidth]{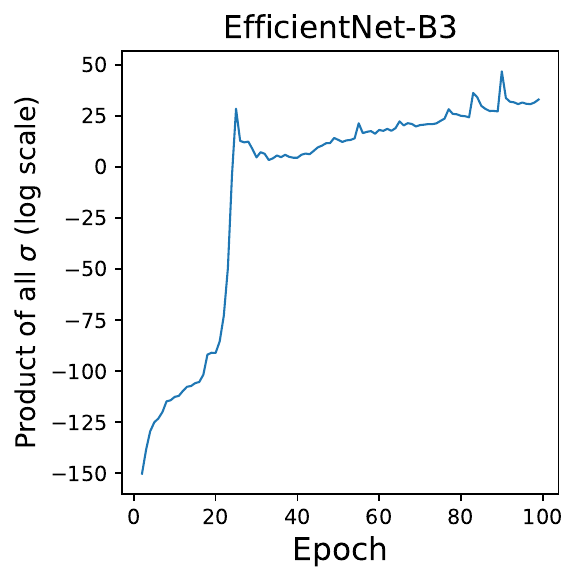} 
        \end{minipage}
\caption{Evolution of weight norms and input variances for some layers in  EfficientNet-B3 trained on the CIFAR-10 dataset. The input variance $(\sigma^2)$ is computed over mini-batches before  each $\BN$ layer. } 
\label{fig:preBN-variance-Norm-EfficientNet}
\end{figure*}

Figure \ref{fig:preBN-variance-Norm-EfficientNet} reports some behaviors of EfficientNet-B3 along the training process. We observe that variance at some initial steps has  small values. However, those variances and norms of the weight matrices consistently increase as training the model with more mini-batches. After the training, those quantities can be significantly large, leading to huge products of norms/variances. This is consistent with the behaviors of ResNet18 and the 10-layer ReLU network in the main paper.

\vskip 0.2in

\bibliography{ann,normalizers,other}

\end{document}